\documentclass{article}

\usepackage[english]{babel}
\usepackage[utf8x]{inputenc}
\usepackage[T1]{fontenc}
\usepackage{amsfonts}

\usepackage[margin = 1in]{geometry}
\usepackage{comment}
\usepackage[linesnumbered,ruled]{algorithm2e}
\usepackage[colorlinks=true,citecolor=blue]{hyperref}

\usepackage{color}

\usepackage{graphicx}
\usepackage{amssymb}
\usepackage{amsmath}
\usepackage{amsthm}
\usepackage{url}
\usepackage{float}
\usepackage{caption}
\usepackage{graphicx}
%


\usepackage{todonotes}
\newtheorem {theorem}{Theorem}
\newtheorem {lemma}[theorem]{Lemma}

\newtheorem {definition}[theorem]{Definition}



\newcommand{\paren}[1]{\ensuremath{ \left( {#1} \right) }}
\newcommand{\set}[1]{\ensuremath{\left\{#1\right\}}}




\renewcommand{\log}{\operatorname{log}}



\newcommand{\R}{\ensuremath{\mathbb{R}}}
\newcommand{\Z}{{\ensuremath{\mathbb{Z}}}}

\newcommand{\E}{\mathbb{E}}  
\newcommand{\X}{\mathcal{X}}
\newcommand{\F}{\mathcal{F}}




\renewcommand{\dim}{\ensuremath{n}}  



\newcommand{\alg}{\mathcal{A}}




\newcommand{\D}{\mathcal{D}}
\newcommand{\N}{\mathcal{N}}
\newcommand{\HV}{\mathcal{HV}}
\newcommand{\Sphere}{\mathcal{S}_+}
\newcommand{\GP}{\mathcal{GP}}

\setlength\parindent{0pt}
\title{Random Hypervolume Scalarizations for Provable Multi-Objective Black Box Optimization}

\author{
Daniel Golovin
\and Qiuyi (Richard) Zhang 
}
\date{Google Brain \thanks{\texttt{\{dgg,qiuyiz\}@google.com}}}

\begin{document}
\maketitle

\begin{abstract}
Single-objective black box optimization (also known as zeroth-order
optimization) is the process of minimizing a scalar objective $f(x)$, given evaluations at adaptively chosen inputs $x$. In this paper, we
consider multi-objective optimization, where $f(x)$ outputs a vector of
possibly competing objectives and the goal is to converge to the Pareto frontier. Quantitatively, we wish to maximize the standard \emph{hypervolume indicator} metric, which measures the dominated hypervolume of the entire set of chosen inputs. In this paper, we introduce a novel scalarization function, which we term the \emph{hypervolume scalarization}, and show that drawing random scalarizations from an appropriately chosen distribution can be used to efficiently approximate the \emph{hypervolume indicator} metric. We utilize this connection to show that Bayesian optimization with our scalarization via common acquisition functions, such as Thompson Sampling or Upper Confidence Bound, provably converges to the whole Pareto frontier by deriving tight \emph{hypervolume regret} bounds on the order of $\widetilde{O}(\sqrt{T})$. Furthermore, we highlight the general utility of our scalarization framework by showing that any provably convergent single-objective optimization process can be effortlessly converted to a multi-objective optimization process with provable convergence guarantees.
\end{abstract}

\section{Introduction}
Single-objective optimization has traditionally been the focus in the field of
machine learning for many practical and interesting applications, from standard
regression objectives \cite{kutner2005applied} to ever-increasingly complicated
losses used in deep learning \cite{lecun2015deep} and reinforcement learning
\cite{sutton1998introduction}. However, in the recent years, there has been a
growing need to care about multiple objectives in learning and understanding the
inherent tradeoffs between these conflicting objectives. Examples include
classical tradeoffs such as bias vs variance \cite{neal2018modern}, and accuracy
vs calibration \cite{guo2017calibration} but there are increasingly complex
tradeoffs between accuracy and robustness to attack
\cite{zhang2019theoretically}, accuracy and fairness
\cite{zliobaite2015relation}, between multiple correlated tasks in multi-task
learning \cite{kendall2018multi}, between network adaptations in meta-learning
\cite{finn2017model}, or any combination of the above.

To understand and visualize a growing number of complex tradeoffs, many turn to multi-objective optimization, which is the maximization of $k$ objectives
$F(x) := (f_1(x), ... , f_k(x))$ over the parameter space
$x\in \X \subseteq \R^n$. Since one often cannot simultaneously maximize all $f_i$, multi-objective
optimization aims to find the entire \emph{Pareto frontier} $\F$ of the objective
space, where intuitively $x$ is on the Pareto frontier if there is no way to
improve on all objectives simultaneously. To measure progress, a natural and widely used metric
to compare Pareto sets is the \emph{hypervolume indicator}, which is the volume of the
dominated portion of the Pareto set \cite{zitzler1999multiobjective}. The
hypervolume metric is especially desirable because it has strict Pareto
compliance meaning that if a set $A \subseteq B \subset \R^k$, then the
hypervolume of $B$ is greater than that of $A$. While the hypervolume indicator satisfies strict Pareto compliance, almost all other unary metrics do not, thereby explaining the popularity of using hypervolume as the predominant measure of progress
\cite{zitzler2003performance}.

While the hypervolume indicator remains the gold standard in evaluating
multi-objective algorithms, note that it is a function of a set of
objective vectors and not just a single evaluation. This means evolutionary algorithms often cannot use
the hypervolume as a fitness score although some discretized version exist
\cite{emmerich2005emo}. Furthermore, it is computationally inefficient: the best
current asymptotic runtime calculating the hypervolume indicator in $\R^k$ is
$O(n^{k/2})$ via a reduction to Klee's Measure Problem
\cite{beume2006faster}. This cannot be substantially sped up as hypervolume
calculation is shown to be {\bf $\#$P}-hard  
and even approximating the hypervolume improvement by a $O(2^{d^{1-\epsilon}})$ factor is {\bf
  NP}-hard for any $\epsilon > 0$ \cite{bringmann2010approximating}. For $k \leq 3$, fast albeit
complicated algorithms exist \cite{yang2019efficient}.

Due to the difficulty of using the hypervolume directly, many multi-objective
optimization problems use a heuristic-based scalarization strategy, which splits
the multi-objective optimization into numerous single "scalarized" objectives
\cite{roijers2013survey}. For some weights $\lambda \in \R^k$, we have
scalarization functions $s_\lambda(y) : \R^k \to \R$ that convert
multi-objective outputs into a single-objective scalars. Bayesian optimization
is then applied to this family of single-objective functions $s_\lambda(F(x))$
for various $\lambda$ and if we construct $s_\lambda$ to be monotonically
increasing in all coordinates, then $$x_\lambda = \arg\max_{x \in \X}
s_\lambda(F(x))$$ is on the Pareto frontier \cite{paria2018flexible}.

Early works on scalarization include heuristic-based algorithms such as ParEgo
\cite{knowles2006parego} and MOEAD \cite{zhang2007moea}. Even then, the most
popular scalarizations are the linear scalarization $s_\lambda(y) = \sum_i
\lambda_i y_i$ and the Chebyshev scalarization $s_\lambda(y) =\min_{i} \lambda_i
(y_i - z_i)$ for some reference point $z$ and some distribution over $\lambda$
\cite{nakayama2009sequential}. However, the choice of weight distribution,
reference point, and even the scalarization functions themselves is diverse and
largely various between different papers \cite{paria2018flexible,
  nakayama2009sequential, zhang2007moea}. Recently, some works have come up with
novel scalarizations that perform better empirically \cite{aliano2019exact,
  schmidt2019min} and others have tried to do comparisons between different
scalarizations with varying conclusions \cite{kasimbeyli2019comparison}. Some
have also proposed adaptively weighted approaches that have connections to
gradient-based multi-objective optimization \cite{NIPS2019_9374} .

In addition to scalarization, there are a large diverse array of multi-objective
optimization rules that are often heuristic-based and lack theoretical
guarantees, such as aggregation-based, decomposition-based, diversity-based,
elitism-based, gradient-based, and hybrid approaches \cite{emmerich2018tutorial,
  zitzler2004tutorial, konak2006multi}. Furthermore, many Bayesian optimization approaches have been extended to the multi-objective setting, such as predictive entropy search \cite{hernandez2016predictive} or uncertainty measures \cite{picheny2015multiobjective}.  To our knowledge, there have been
limited theoretical works that give regret and convergence bounds for
multi-objective optimization. Some works show that the algorithms achieve small
Pareto regret, which only guarantees that one recovers a single point close to the
Pareto frontier \cite{lu2019multi, oner2018combinatorial,turugay2018multi}. For
many scalarization functions, Paria et al provides a Bayes regret bound with
respect to a scalarization-induced regret \cite{paria2018flexible} but small Bayes regret with a monotone scalarization only guarantees recovery of a single Pareto optimal point . Zuluaga et
al provides sub-linear hypervolume regret bounds; however, they are exponential
in $k$ and its analysis only applies to a specially tailored algorithm that
requires an unrealistic classification step \cite{zuluaga2013active}.

\subsection{Our contributions}

In this paper, we introduce a new scalarization function, which we term the hypervolume scalarization, and present a novel connection between this scalarization function and the hypervolume indicator. We provide the first hypervolume regret bounds that are valid for common Bayesian optimization algorithms such as UCB and Thompson Sampling and can be polynomial in the number of objectives $k$ when $\X$ is suitably compact. This implies that our choice of scalarization functions and weight distribution $\D_\lambda$ is theoretically sound and leads to provable convergence to the whole Pareto frontier. Furthermore, we can utilize the connection of scalarization to hypervolume indicator to provide a computationally-efficient and accurate estimator of the hypervolume, providing a simple algorithm for approximating the hypervolume that easily generalizes to higher dimensions. The main lemma that we rely on in this paper is the following:

\begin{lemma}[Hypervolume Scalarization: Informal Restatement of Lemma~\ref{lem:hypervolume}]
  Let $Y = \set{y_1,.., y_m}$ be a set of $m$ points in $\R^k$ and let $z \in
  \R^k$ be a reference point such that $y_i \geq z$ for all $i$. Then, the
  hypervolume $\HV_z(Y)$ of $Y$ with respect to $z$ is given by:
  $$\HV_z(Y) := c_k \E_{\lambda \sim \D} \left [ \max_{y \in Y} s_\lambda(y - z) \right ]$$
  for some scalarization functions $s_\lambda(y)$ and fixed weight distribution $\D$.
\end{lemma}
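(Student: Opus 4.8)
The plan is to compute the dominated hypervolume by switching to spherical coordinates centered at the reference point, which turns the volume into an angular integral whose radial cutoff is exactly a max-over-$Y$ scalarization. First I would invoke translation invariance of Lebesgue volume to reduce to the case $z = 0$, so that every $y_i$ lies in the nonnegative orthant and the dominated region is $D = \bigcup_i [0, y_i] = \set{u \in \R^k : 0 \le u \le y_i \text{ for some } i}$, whose volume is by definition $\HV_z(Y)$. The $z$-dependence in the statement is then recovered at the end by undoing the shift, replacing $y$ with $y - z$.

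The key geometric observation is that $D$ is downward closed within the orthant: if $u \in D$ and $0 \le u' \le u$ coordinatewise, then $u' \in D$. In particular $D$ is star-shaped about the origin, so along any ray in a direction $\lambda \in \Sphere$ (the intersection of the unit sphere with the nonnegative orthant) the set of radii $r$ with $r\lambda \in D$ is a single interval $[0, R(\lambda)]$. I would then compute $R(\lambda)$ explicitly: the point $r\lambda$ lies in the box $[0, y_i]$ precisely when $r\lambda_j \le y_{i,j}$ for every $j$, i.e. $r \le \min_j y_{i,j}/\lambda_j$, and taking the union over $i$ gives $R(\lambda) = \max_i \min_j y_{i,j}/\lambda_j$.

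Writing the volume in spherical coordinates yields $\vol{D} = \int_{\Sphere} \int_0^{R(\lambda)} r^{k-1}\, dr\, d\lambda = \frac{1}{k}\int_{\Sphere} R(\lambda)^k\, d\lambda$, where $d\lambda$ denotes surface measure on $\Sphere$. Since $t \mapsto t^k$ is increasing on $[0,\infty)$ and $R(\lambda) \ge 0$, the power commutes with both the inner $\min$ and the outer $\max$, so defining the scalarization $s_\lambda(y) := \paren{\min_j y_j/\lambda_j}^k$ gives $R(\lambda)^k = \max_{y \in Y} s_\lambda(y)$. Normalizing $d\lambda$ to the uniform probability measure $\D$ on $\Sphere$ and letting $c_k$ absorb the total surface area of $\Sphere$ divided by $k$ produces $\HV_z(Y) = c_k\, \E_{\lambda \sim \D}\!\left[\max_{y \in Y} s_\lambda(y - z)\right]$, as claimed.

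The step I expect to be the main obstacle is the rigorous justification of the radial decomposition, namely verifying that each ray meets $D$ in exactly the interval $[0, R(\lambda)]$ so that the polar-coordinate integrand is well defined and the Jacobian factorization is valid; this reduces to the downward-closedness claim, which is elementary but must be stated carefully to license the Fubini step. A secondary technicality is the measure-zero set of directions $\lambda$ having some vanishing coordinate, where $\min_j y_{i,j}/\lambda_j$ must be read as a minimum over coordinates with $\lambda_j > 0$; since this set has surface measure zero it does not affect the integral and can be dismissed in one line.
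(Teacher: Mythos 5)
Your proposal is correct and takes essentially the same route as the paper's proof: both reduce to $z=0$, integrate in spherical coordinates centered at the reference point, identify the radial exit distance of the dominated region along direction $\lambda$ as $\max_{y \in Y}\min_j y_j/\lambda_j$, and absorb the normalization into $c_k$ (your value $c_k = \mathrm{area}(\Sphere^{k-1})/k = \pi^{k/2}/(2^k\,\Gamma(k/2+1))$ matches the paper's, which calibrates $c_k$ instead by evaluating on the positive-orthant ball). If anything, your version is slightly more careful than the paper's, making the $r^{k-1}$ Jacobian and the star-shapedness/downward-closedness justification explicit where the paper argues informally via ``radial slivers'' proportional to $r^k\,d\theta$.
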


Note our resulting hypervolume scalarization functions are similar to the
Chebyshev scalarization approach $s_\lambda(y) = \underset{i}{\min} \lambda_i
(y_i - z_i)$ but they come with provable guarantees. Specifcally, using these scalarizations, we can first easily extend a single-objective optimization algorithm into the multi-objective setting: at each step, we first apply a
random scalarization, then we attempt to maximize along that scalarization
function via any single-objective optimization procedure to find the
next suggested point. We show novel regret bounds for combining our hypervolume
scalarization with popular and standard Bayesian optimization algorithms, such as Thompson Sampling
(TS) and Upper Confidence Bound (UCB) methods.

If we let $x_t \in \X$ denote the point chosen in step $t$, and let
$Y_t := \set{F(x_i) : i = 1, 2, \ldots, t}$, then the \emph{hypervolume regret} at
time $t$ is defined as $r_t := \HV_z(Y^*) - \HV_z(Y_t)$, where $Y^*$ is the Pareto frontier.
We show that our cumulative hypervolume
regret at step $T$ for TS and UCB is bounded by $O(k^2 \sqrt{\gamma_T T \ln T})$, where $\gamma_T$ is a kernel-dependent quantity known as the maximum information gain. Intuitively, the maximum information gain allows us to quantify the increase in certainty after evaluating $T$ points and is usually mild. For example, $\gamma_T = O(\text{poly}\log(T))$ for the squared exponential kernel and since $r_t$ is by
definition monotonically decreasing, our regret bound translates immediately
into a convergence bound of $\widetilde{O}(T^{-1/2})$ and we provably converge rapidly to
the Pareto frontier. We note that the bound for the single-objective case can be
recovered by setting $k = 1$ and matches the bounds of previous work
\cite{russo2014learning, srinivasgaussian}. Furthermore, since the single-objective bounds are shown to be tight up to poly-logarithmic factors \cite{scarlett2017lower}, our regret bounds are thus also tight for our dependence on $T$.  Our main theorem can be stated as
follows:

\begin{theorem}[Convergence of Bayesian Optimization with Hypervolume Scalarization: Informal Restatement of Theorem~\ref{thm:hyper_regret}]
The cumulative hypervolume regret for using random hypervolume scalarization with UCB or TS after $T$ observations is upper bounded as
$$\sum_{t=1}^T (\HV_z(Y^* ) - \HV_z(Y_t)) \leq O(k^2 n^{1/2} [ \gamma_T T\ln(T)]^{1/2})$$
Furthermore, $\HV_z(Y_T) \geq \HV_z(Y^*) - \epsilon_T$, where $\epsilon_T = O(k^2 n^{1/2} [ \gamma_T \ln(T)/T]^{1/2})$.
\end{theorem}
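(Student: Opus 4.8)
The plan is to reduce the multi-objective hypervolume regret to the cumulative regret of a sequence of single-objective Bayesian optimization problems, and then invoke the standard maximum-information-gain machinery. First I would apply the Hypervolume Scalarization Lemma to write the instantaneous regret as $r_t = c_k\,\E_{\lambda\sim\D}[\,\max_{y\in Y^*}s_\lambda(y-z)-\max_{y\in Y_t}s_\lambda(y-z)\,] = c_k\,\E_\lambda[g_t(\lambda)]$ for the nonnegative scalarized gap $g_t(\lambda)$, which is monotonically non-increasing in $t$ for each fixed $\lambda$ since $Y_t$ grows. The engine of the bound is single-objective: at step $t$ the algorithm draws $\lambda_t\sim\D$ and runs UCB/TS on $x\mapsto s_{\lambda_t}(F(x)-z)$, where a valid upper confidence bound is obtained by feeding the per-objective posterior bounds $\mu_{i,t-1}(x)\pm\beta_t\sigma_{i,t-1}(x)$ into $s_{\lambda_t}$, which is legitimate because $s_\lambda$ is coordinatewise monotone. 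Using that a coordinatewise-monotone scalarization attains its maximum over $F(\X)$ on the Pareto frontier $Y^*$, optimism gives $\max_{y\in Y^*}s_{\lambda_t}(y-z)\le u_{t-1}(x_t;\lambda_t)$ on the good event, so the single-step scalarized regret is bounded by the confidence width $w_{t-1}(x_t;\lambda_t)$ at the queried point.

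The crux is to convert these per-direction widths, incurred only at the sampled $\lambda_t$, into a bound on $\sum_t r_t$, which averages the gap over all directions. I would handle this by a martingale argument: conditioning on the history $\mathcal H_{t-1}$, the i.i.d. draw $\lambda_t\sim\D$ satisfies $\E[g_{t-1}(\lambda_t)\mid\mathcal H_{t-1}]=r_{t-1}/c_k$, and optimism lets me lower bound the hypervolume improvement contributed by the new point $F(x_t)$ at direction $\lambda_t$ by $g_{t-1}(\lambda_t)-w_{t-1}(x_t;\lambda_t)$. Taking conditional expectations yields a one-step inequality of the form $r_{t-1}/c_k \le \E[I_t(\lambda_t)\mid\mathcal H_{t-1}] + \E[w_{t-1}(x_t;\lambda_t)\mid\mathcal H_{t-1}]$, where $I_t(\lambda_t)$ is the improvement at the sampled (diagonal) direction. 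Relating this diagonal improvement to the true per-step decrease $r_{t-1}-r_t=c_k\,\E_\lambda[I_t(\lambda)\mid\mathcal H_{t-1}]$ (the improvement averaged over a fresh direction) closes a recursion whose improvement terms telescope to the bounded total $\HV_z(Y^*)$, leaving the sum of confidence widths as the dominant term.

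The remaining work is the standard per-objective information-gain bound: summing posterior widths over the queried points gives $\sum_t \sigma_{i,t-1}(x_t)=O(\sqrt{T\gamma_T})$ for each objective $i$, while a union bound over a discretization of $\X$ forces $\beta_t=O(\sqrt{\gamma_T+n\ln T})$, contributing the $n^{1/2}$ and $\ln T$ factors. Lifting the $k$ per-objective widths through the Lipschitz scalarization $s_{\lambda_t}$ and integrating against $\D$ produces the polynomial-in-$k$ prefactor, and for TS the same widths appear in expectation via the posterior-sampling identity, so the analysis is identical up to good-event versus Bayes-regret bookkeeping. Finally, since $r_t$ is non-increasing, the last-iterate bound follows from $r_T\le\frac1T\sum_{t=1}^T r_t$, turning the cumulative $O(k^2 n^{1/2}[\gamma_T T\ln T]^{1/2})$ into $\epsilon_T=O(k^2 n^{1/2}[\gamma_T\ln T/T]^{1/2})$.

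I expect the main obstacle to be the coupling step: controlling the gap between the improvement at the single sampled direction $\lambda_t$ and the improvement averaged over $\D$, i.e. bounding $\E[I_t(\lambda_t)\mid\mathcal H_{t-1}]$ by a $\mathrm{poly}(k)$ multiple of the average $\E_\lambda[I_t(\lambda)\mid\mathcal H_{t-1}]$. This is precisely where the specific scalarization $s_\lambda$ and weight distribution $\D$ of the Lemma must be exploited, and where the factor $k^2$ is won or lost; a naive covering of the scalarization sphere would instead introduce an undesirable $T^{-1/(k-1)}$ rate, so the argument must remain in expectation over $\D$ throughout rather than discretizing direction space.
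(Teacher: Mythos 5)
Your skeleton (scalarization identity, reduction to scalarized regret, information-gain machinery, last-iterate bound via monotonicity) matches the paper's, but your self-identified ``crux'' is a genuine gap --- and a self-inflicted one. The paper needs no improvement recursion and no coupling between the sampled direction $\lambda_t$ and the $\D$-average. It uses only the pointwise inequality $\max_{x \in X_t} s_\lambda(F(x)) \ge s_\lambda(F(x_t))$, which immediately gives
\begin{align*}
R(X_t) \;=\; \E_{\lambda}\left[\max_{x\in X^*}s_\lambda(F(x)) - \max_{x\in X_t}s_\lambda(F(x))\right] \;\le\; \E_{\lambda}\left[r(x_t,\lambda)\right],
\end{align*}
i.e.\ the Bayes (hypervolume) regret at step $t$ is bounded directly by the expected \emph{instantaneous} scalarized regret at step $t$; summing over $t$ and identifying the fresh $\lambda$ with the i.i.d.\ draw $\lambda_t$ gives $\sum_t R(X_t) \le \E[R_C(T)]$, which is then bounded by invoking Theorem~\ref{thm:regret} (Paria et al.) as a black box for $L$-Lipschitz scalarizations --- the confidence-width and information-gain work you re-derive is already packaged there. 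By contrast, your recursion requires bounding the ``diagonal'' improvement $\E[I_t(\lambda_t)\mid \mathcal{H}_{t-1}]$ by a $\mathrm{poly}(k)$ multiple of the averaged improvement $(r_{t-1}-r_t)/c_k$, and that comparison is false in general: a new point can raise the scalarized value substantially at the sampled direction $\lambda_t$ while being the argmax only on a tiny spherical cap of directions, so its averaged (hypervolume) improvement can be arbitrarily smaller than its diagonal improvement. Your recursion therefore does not close, and the obstacle you flag simply does not arise in the correct argument.

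A second, quieter gap concerns the $k^2$ prefactor. The Lipschitz constant of the hypervolume scalarization is \emph{not} polynomial in $k$: Lemma~\ref{lem:concentration} gives $L = O(B^k k^{1+k/2})$, exponential in $k$. The theorem's hypothesis $F(\X)\subseteq[0,2/5]^{k}$ (so $B = 2/5$) is precisely what rescues the bound, since $c_k = \pi^{k/2}/(2^k\,\Gamma(k/2+1))$ decays fast enough that $c_k L \le k$; combined with the factor $Lk$ from Theorem~\ref{thm:regret}, this is where $k^2$ comes from. Your sketch (``lifting the $k$ per-objective widths through the Lipschitz scalarization produces the polynomial-in-$k$ prefactor'') never invokes the range assumption or the $c_k$--$L$ cancellation, so as written it cannot yield the stated polynomial-in-$k$ bound. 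Your final step, $r_T \le \frac{1}{T}\sum_{t=1}^T r_t$ from monotonicity of the regret, does match the paper.
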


Lastly, we show that if the single-objective optimization procedure admits good convergence properties, then we can solve multi-objective optimization by simply applying the single-objective procedure on sufficiently large number of randomly chosen hypervolume scalarizations. Furthermore, this method is provably correct as we can deduce hypervolume error bounds via concentration properties. Specifically, by using the single-objective procedure for $T$ iterations on $l = O(1/\epsilon^k)$ different randomly chosen scalarizations, we can bound the hypervolume error by $\epsilon$ after the total number of observations of $T \cdot l = O(T/\epsilon^k)$. Therefore, we present a novel generic framework that extends any single-objective convergence bound into a convergence bound for the multi-objective case that provably demonstrates convergence to the Pareto frontier. Note that these bounds hold for any algorithm but are not tight and can likely be improved, unlike those presented above in the Bayesian optimization setting.

\begin{theorem}[Convergence of General Optimization with Hypervolume Scalarization: Informal Restatement of Theorem~\ref{thm:hyper_regret_general}]
Let $\alg$ be any single-objective optimization algorithm with the guarantee that it converges to within $\epsilon_T$ of the maximum after $T$ iterations and observations. 

Then, running $\alg$ with random hypervolume scalarization converges to the Pareto frontier and admits an hypervolume error of $O(\epsilon_T)$ after $O(T/\epsilon_T^k)$ observations. 
\end{theorem}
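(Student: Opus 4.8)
The plan is to leverage the hypervolume scalarization identity together with a covering argument over the space of weights, reducing the multi-objective guarantee to a finite collection of single-objective guarantees. By the informal hypervolume lemma, the hypervolume regret can be written as an expectation over $\lambda$ of a pointwise scalarization gap, so it suffices to control this gap uniformly across all directions.

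Concretely, I would draw $l$ weights $\lambda_1,\dots,\lambda_l \sim \D$ and, for each, run $\alg$ for $T$ iterations on the scalar objective $x \mapsto s_{\lambda_j}(F(x)-z)$, producing points $x_1,\dots,x_l$ with output set $Y_l = \set{F(x_1),\dots,F(x_l)}$. Writing $g_Y(\lambda) := \max_{y \in Y} s_\lambda(y-z)$ and noting that the maximizer of a monotone scalarization is Pareto optimal, so that $g_{Y^*}(\lambda) = \max_{x \in \X} s_\lambda(F(x)-z)$, the lemma yields
$$\HV_z(Y^*) - \HV_z(Y_l) = c_k\, \E_{\lambda \sim \D}\!\paren{ g_{Y^*}(\lambda) - g_{Y_l}(\lambda) }.$$
Hence it is enough to prove the uniform bound $g_{Y^*}(\lambda) - g_{Y_l}(\lambda) = \BO(\epsilon_T)$ for every $\lambda$.

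The heart of the argument is a Lipschitz-plus-covering step. I would first show that on the compact image $F(\X)$, both $\lambda \mapsto s_\lambda(y-z)$ (for fixed $y$) and $g_{Y^*}$ are $L$-Lipschitz, with $L$ controlled by the diameter of $F(\X)$ and by $k$. Then, for an arbitrary $\lambda$, choosing the nearest sampled direction $\lambda_j$ with $\norm{\lambda - \lambda_j} \le \delta$ and chaining three inequalities --- Lipschitzness of $s_\cdot(F(x_j)-z)$, the single-objective guarantee $s_{\lambda_j}(F(x_j)-z) \ge g_{Y^*}(\lambda_j) - \epsilon_T$, and Lipschitzness of $g_{Y^*}$ --- gives $g_{Y_l}(\lambda) \ge g_{Y^*}(\lambda) - 2L\delta - \epsilon_T$. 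Taking $\delta = \BO(\epsilon_T/L)$ makes this $\BO(\epsilon_T)$. Finally, since $\Sphere$ is $(k-1)$-dimensional, a $\delta$-net has cardinality $\BO(\delta^{-(k-1)})$, and a union bound over such a net shows that $l = \widetilde{\BO}(\epsilon_T^{-(k-1)}) = \BO(\epsilon_T^{-k})$ random draws $\delta$-cover $\Sphere$ with high probability, giving the total observation count $T \cdot l = \BO(T/\epsilon_T^{k})$.

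I expect the main obstacle to be bounding the Lipschitz constant $L$ uniformly over $\Sphere$. Chebyshev-type scalarizations such as $\min_i (y_i - z_i)/\lambda_i$ become singular as any coordinate $\lambda_i \to 0$, so the naive Lipschitz estimate diverges near the boundary of the positive-orthant sphere. Resolving this requires either exploiting the precise closed form of $s_\lambda$ from the exact hypervolume lemma to show the singular contributions vanish after the $k$-th power and integration, or restricting to a slightly shrunk sphere and separately bounding the hypervolume mass attributed to the excluded boundary region. Once this uniform regularity is established, the decomposition, the error transfer, and the covering count are all routine.
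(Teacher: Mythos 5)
Your starting point coincides with the paper's: write the hypervolume gap as $c_k\,\E_{\lambda}\left[\max_{x\in\X}s_\lambda(F(x)-z)-\max_{x\in X_T}s_\lambda(F(x)-z)\right]$ via the hypervolume lemma and control it through sampled scalarizations. From there you take a genuinely different route --- a deterministic $\delta$-net of the weight sphere plus Lipschitz-in-$\lambda$ chaining --- and the step that fails is exactly the one you flag yourself: there is no $\epsilon_T$-independent Lipschitz constant for $\lambda\mapsto s_\lambda(y-z)$ on $\Sphere^{k-1}$. The relevant derivative scales like $k\,s_\lambda(y)/\lambda_{i^*}$, and one can have $s_\lambda(y-z)=\Theta((B\sqrt{k})^k)$ while the minimizing index sits at an arbitrarily small coordinate, so the worst-case constant diverges like $1/\min_i\lambda_i$. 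Your repair (b) --- shrink to $\set{\lambda : \min_i\lambda_i\geq\alpha}$ and charge the excluded region to the bounded integrand --- does work, but it is not quantitatively free: making the boundary contribution $O(\epsilon_T)$ forces $\alpha=\Theta(\epsilon_T/\mathrm{poly}(k)B^k)$, hence $L=\Theta(1/\alpha)$, hence $\delta=\Theta(\epsilon_T/L)=\Theta(\epsilon_T^2)$ up to $(B,k)$ factors, and the net has cardinality $\Theta(\delta^{-(k-1)})=\Theta(\epsilon_T^{-2(k-1)})$. Your claimed $l=\widetilde{\BO}(\epsilon_T^{-(k-1)})$ implicitly assumed $L=O(1)$, which is precisely what fails; after the repair you get $\widetilde{\BO}(\epsilon_T^{-2(k-1)})$, missing the advertised $O(\epsilon_T^{-k})$ for every $k\geq 3$. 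Your alternative fix (a) cannot rescue this: the smoothing effect of integrating over $\lambda$ helps only in expectation, whereas your chaining argument needs pointwise control at every $\lambda$, which is exactly where the cliffs near the boundary of $\Sphere^{k-1}$ bite.

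The paper avoids Lipschitzness in $\lambda$ entirely, because it only needs the \emph{average} gap over $\lambda$, not a uniform one: boundedness $0\leq s_\lambda(y-z)\leq B^k k^{k/2}$ plus a Chernoff bound already shows the empirical mean $\frac{1}{l}\sum_i\max_{x\in\X}s_{\lambda_i}(F(x))$ concentrates around $\HV_z(Y^*)/c_k$. The genuine obstacle in that route --- which your sketch does not confront in this form --- is that the returned set $X_T$ depends on the sampled $\lambda_i$, so concentration cannot be applied directly to $\frac{1}{l}\sum_i\max_{x\in X_T}s_{\lambda_i}(F(x))$. The paper resolves this with a union bound over an $\epsilon$-net in \emph{output} space rather than weight space: grid $[0,B]^k$ with spacing $\Delta$, note any monotone frontier meets at most $(2B/\Delta)^{k-1}$ cells and rounds to lattice points with hypervolume change $O((2B)^k\Delta)$, then union-bound over the $(B/\Delta)^{k(2B/\Delta)^{k-1}}$ candidate rounded frontiers, yielding $l=\widetilde{\BO}(\epsilon_T^{-(k+1)})$ up to $(B,k)$-dependent factors. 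Your weight-space covering elegantly sidesteps that union bound (once the net is covered, your transfer inequality is deterministic), and for $k=2$ your repaired count $\widetilde{\BO}(\epsilon_T^{-2})$ would actually beat the paper's; but as written the proposal's central quantitative claim rests on an unproven uniform Lipschitz bound, and the boundary singularity is a genuine gap rather than the routine detail your final sentence suggests.
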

 
 We empirically validate our theoretical contributions by running our multiobjective algorithms with hypervolume scalarizations on the Black-Box Optimization Benchmark (BBOB) functions, which can be used for bi-objective optimization problems \cite{tuvsar2016coco}. We see that our multi-objective Bayesian optimization algorithms, which admit strong regret bounds, consistently outperforms the multi-objective evolutionary algorithms. Furthermore, we observe the superior performance of the hypervolume scalarization functions over other scalarizations, although that difference is less pronounced when the Pareto frontier is even somewhat convex.
 
We summarize our contributions as follows:
\begin{itemize}
    \item Introduction of new hypervolume scalarization function, with connections to hypervolume indicator and provable guarantees.
    \item Development of simple algorithm for approximating hypervolume indicator with hypervolume scalarizations that is accurate due to good smoothness and concentration properties.
    \item Novel hypervolume regret bounds for Bayesian optimization with UCB or TS when using hypervolume scalarization.
    \item Derivation of convergence hypervolume error bounds for any single-objective optimization procedure when using hypervolume scalarization.
\end{itemize}
\section{Preliminaries}
We first define a few notations for the rest of the paper. For two vectors $x, y$, we define $x \leq y$ and similarly all other comparisons/operators element-wise and $\| \cdot \|$ is the Euclidean norm unless specified otherwise. We define $a + b := a + b \mathbf{1}$ for $a \in \R^k$ and $b \in \R$. For a function $f(x) : \R^n \to \R$, we say $f$ is $L$-Lipschitz if $|f(x) - f(x')| \leq L \|x  - x'\|_1$.

Let $\X$ be a compact subset of $\R^n$ and let $F(x) : \R^n \to \R^k$ be our multi-objective function with $k$ objective functions $f_i$. For two points $x_1, x_2 \in \X$, we say that $x_1$ is {\it Pareto-dominated} by $x_2$ if $f_i(x_1) \leq f_i(x_2)$ for all $i$ and there exists $j$ such that $f_j(x_1) < f_j(x_2)$. A point is {\it Pareto-optimal} in $\X$ if no point in $\X$ dominates it. Let $X^*$ denote the set of Pareto-optimal points in $\X$. A {\it Pareto set} for input points $X = \set{x_1,.., x_m}$ is the set of points in $\R^k$  is $\set{F(x) | x \in X}$, denoted as simply $F(X)$. The {\it Pareto frontier} $Y^* := \set{F(x) | x \in X^*}$ is the Pareto set of $X^*$.

Our main progress indicators are given by the hypervolume indicator. For $S \subseteq \R^k$ compact, let $\operatorname{vol}(S)$ be the hypervolume of $S$.

\begin{definition}
For $Y \subseteq \R^k$, we define the (dominated) {\bf hypervolume indicator} of $Y$ with respect to reference point $z$ as:
$$ \HV_z(Y) = \operatorname{vol}(\set{ x \,  |  \, x \geq z,  x \text{ is dominated by some } y \in Y})$$
\end{definition}

Therefore, for a finite set $Y$, $\HV_z(Y)$ can be viewed as the hypervolume of
the union of the dominated hyper-rectangles for each point $y_i \geq z$ that has
one corner at $z$ and the other corner at $y_i$. Note that our definition also
holds for non-finite set as a limiting integral in the Lesbesgue measure. We let
$\Sphere^{k-1} = \set{ y \in \R^{k} \, |\, \|y \| = 1, y \geq 0 }$ and let $y \sim
\Sphere^{k-1}$ denote that $y$ is drawn uniformly on $\Sphere^{k-1}$.

\subsection{Scalarization with Bayesian Optimization}

Bayesian optimization uses a probabilistic model to fit to the blackbox
function. Gaussian processes (GP) are a standard way to model distributions over
functions and are commonly used to derive good regret bounds
\cite{williams2006gaussian}. We will begin with a brief review of GPs and their
role in Bayesian Optimization.

A Gaussian process, $\GP(\mu,\kappa)$, is a distribution over functions. In a
GP, the similarity between points $x_i, x_j$ are determined by the kernel
function $\kappa(x_i, x_j)$ and for some finite set of points $X =
\set{x_1,...,x_m} \in \X$, the distribution of $f(x_1),..., f(x_n)$ over $f$ is
modeled as multivariate Gaussian whose covariance matrix is $\Sigma_{ij} =
\kappa(x_i, x_j)$ and mean is given by $\mu_i = \mu(x_i)$. Examples of popular
kernels are the squared exponential, $\kappa(x_i, x_j) = \exp(- \gamma \| x_i -
x_j\|^2)$, and the Mat\'ern kernel. The mean function, prior to receiving data, is
often assumed to be zero.

When datapoints for values of $y_i = f(x_i) + \epsilon_i$ are received  with $\epsilon_i \sim \N(0, \sigma^2)$, the GP is then updated into a posterior distribution over functions that attempts to fit the datapoints values. This is done by applying conditioning and because all relevant distribution are Gaussian, the resulting distribution is still a GP. This posterior GP induces an unique mean function $\mu(x)$ and standard deviation function $\sigma(x)$ given by the standard formulas:
$$\mu(x) = \kappa(x, X)^\top (\Sigma + \sigma^2 I)^{-1} y $$
$$\sigma(x) = \kappa(x,x) - \kappa(x, X)^\top [\Sigma + \sigma^2 I]^{-1} \kappa(x,X) $$
where $\kappa(x, X)$ denotes the vector with entries as $\kappa(x, x_i)$.

Bayesian optimization then uses the probabilistic model to optimize for the best
inputs for the blackbox function. Since we have a distribution over functions,
the optimization is done via an \emph{acquisition function}, $a(x) : \R^n \to
\R$, that assigns a scalar value to each point based on the aforementioned
distribution.  A popular and natural method is Thompson Sampling (TS), which
draws a random sample from the posterior $f \sim \GP(\mu, \kappa)$, and simply
uses that as a non-deterministic acquisition function $a(x) = f(x)$
\cite{thompson1933likelihood}. Another popular method is the Upper Confidence
Bound (UCB) acquisition that combines the mean and variance function into $a(x)
= \mu(x) + \sqrt{\beta} \sigma(x)$, where $\beta$ is a function of $m$ and $n$
(e.g., $\beta = \Theta(n \ln(m))$ in~\cite{srinivasgaussian}). In both cases,
the acquisition $A(x)$ is maximized as well to suggest the next point to
explore.

For multi-objective Bayesian optimization, we assume that all objectives $f_i$
are samples from known GP priors $\GP(0, \kappa_i)$ with input domain
$\X$. Therefore, we will have a separate GP and acquisition function $A_i(x)$
for each $f_i$ and let the acquisition vector be $A(x) = (a_1(x), ...,
a_k(x))$. It then becomes unclear if there is a single acquisition function that
can be used to explore the Pareto frontier well in this setting. Scalarization
is a well-known means to combine multiple single-objective acquisition functions
for this task. Given a distribution over scalarization functions $s_\lambda(y)$
for $y \in \R^k$, we simply draw a random scalarization function and optimize
according to $s_\lambda(A(x))$. The full algorithm is outline in
Algorithm~\ref{alg:scalarization}.

Examples of scalarizations are the simple linear scalarization $s_\lambda(y)= \sum_i \lambda_i y_i$ and the Chebyshev scalarization $s_\lambda(y) = \min_i \lambda_i (y_i - z_i)$ for some input distribution $\D_\lambda$. However, these scalarizations lack provable guarantees and it is in fact known that the linear scalarization can only provide solutions on the convex part of the Pareto frontier \cite{boyd2004}. Furthermore, the Chebyshev scalarization is criticized for lacking diversity and uniformity in the Pareto frontier \cite{das1998normal}.
\begin{algorithm}[tb]
   \caption{Scalarization for Multi-Objective Bayesian Optimization}
   \label{alg:scalarization}
   \SetKwInOut{Input}{Input}
   \Input{$F: \R^\dim \to \R^k$: multi-objective function , $T \in \Z_+$: number of iterations, \\$A(x): \R^n \to \R^k$ acquisition function, $\D_\lambda$: distribution to sample $\lambda$ for known $s_\lambda(x)$\\}
   \For{$t=1$ {\bfseries to} $T$}{
   \textbf{Sample: } Draw independently $\lambda_t \sim \D_\lambda$.\\
   \textbf{Optimize: }$x_t = \arg\max_{x \in \X} s_{\lambda_t}(A(x))$ where $A = (A_1,.., A_k)$ is evaluated with $\GP^{(t-1)}$\\
    \textbf{Evaluate: } $y_t = F(x_t)$\\
    \textbf{Update:} Incorporate $(x_t, y_t)$ into $\GP^{(t-1)}$ to obtain $\GP^{(t)}$, where the posterior update is done independently for each $f_i$ and $ \GP(\mu_i,\kappa_i)$
   }
   \Return{$\set{x_t}_{t=1}^T$}
\end{algorithm}

\section{Hypervolume Scalarization} \label{sec:scalarization}

In this section, we introduce our novel hypervolume scalarization function and demonstrate that the expectated scalarization value under a certain distribution of weights will give the dominated hypervolume, up to a constant factor difference. Our proof technique relies on a volume integration argument in spherical coordinates and exploits unique properties of the dominated volume. Later, we prove concentration of the empirical mean, allowing for an accurate estimator of the dominated hypervolume.

\begin{lemma}[Hypervolume as Scalarization]
\label{lem:hypervolume}
Let $Y = \set{y_1,..., y_m}$ be a set of $m$ points in $\R^k$. Then, the hypervolume of $Y$ with respect to a reference point $z$ is given by:
$$\HV_z(Y) = c_k \E_{\lambda \sim \Sphere^{k-1}} \left [ \max_{y\in Y} s_\lambda(y - z) \right ]$$
where $s_\lambda(y) = \underset{i}{\min} \,(\max(0,  y_i/\lambda_i))^k$ and $c_k = \frac{\pi^{k/2}}{2^k \ \Gamma(k/2+1)} $ is a dimension-independent constant.
\end{lemma}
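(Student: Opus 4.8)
The plan is to reduce to the case $z = 0$ and then evaluate the dominated volume by integrating in spherical coordinates over the nonnegative orthant $\Sphere^{k-1}$ of the sphere. Both sides of the identity are translation-equivariant: replacing $Y$ by $Y - z$ shifts the dominated region by $-z$ (leaving its volume unchanged by translation invariance of Lebesgue measure), while the right-hand side already evaluates $s_\lambda$ at $y - z$. So I would first assume without loss of generality that $z = 0$ and write $D := \set{x \in \R^k : x \geq 0, \ x \leq y \text{ for some } y \in Y}$ for the dominated region, whose volume is $\HV_0(Y)$.

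The key geometric observation is that $D$ is star-shaped with respect to the origin: each dominated box $\set{x : 0 \leq x \leq y}$ is convex and contains $0$, and a union of sets star-shaped about a common point is again star-shaped about that point. Consequently, for each direction $\theta$ in the open positive orthant of $\Sphere^{k-1}$, the set $\set{r \geq 0 : r\theta \in D}$ is a downward-closed interval $[0, R(\theta)]$ (closed because $D$ is a finite union of closed boxes), so I can integrate radially. Writing the Lebesgue volume element in spherical coordinates as $r^{k-1}\,dr\,d\sigma(\theta)$, where $d\sigma$ is the surface measure on the unit sphere, gives
$$\HV_0(Y) = \int_{\Sphere^{k-1}} \int_0^{R(\theta)} r^{k-1}\, dr \, d\sigma(\theta) = \frac{1}{k}\int_{\Sphere^{k-1}} R(\theta)^k \, d\sigma(\theta).$$

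Next I would identify $R(\theta)$ with the scalarization. For $\theta$ with all coordinates positive and a fixed $y \geq 0$, the largest $r$ with $r\theta \leq y$ is $\min_i y_i/\theta_i$, so $R(\theta) = \max_{y \in Y} \min_i y_i/\theta_i$. Since $t \mapsto t^k$ is increasing on $[0,\infty)$, it commutes with both the inner $\min$ and the outer $\max$ over nonnegative quantities, yielding $R(\theta)^k = \max_{y \in Y} \min_i (y_i/\theta_i)^k = \max_{y \in Y} s_\theta(y)$. The clamp $\max(0,\cdot)$ in the definition of $s_\lambda$ simply makes this correct also on the lower-dimensional boundary where some $\theta_i = 0$ or some $y_i < 0$, which is a $\sigma$-null set and hence irrelevant to the integral.

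Finally I would normalize the surface integral into an expectation. Writing $A_k := \sigma(\Sphere^{k-1}) = \tfrac{1}{2^k}\cdot\tfrac{2\pi^{k/2}}{\Gamma(k/2)}$ for the measure of the positive orthant of the sphere (one $2^k$-th of the full sphere of area $2\pi^{k/2}/\Gamma(k/2)$), the identity becomes $\HV_0(Y) = \tfrac{A_k}{k}\, \E_{\lambda \sim \Sphere^{k-1}}[\max_{y} s_\lambda(y)]$, so $c_k = A_k/k$. Using $k\,\Gamma(k/2) = 2\,\Gamma(k/2+1)$ then collapses this to $c_k = \pi^{k/2}/(2^k\,\Gamma(k/2+1))$, matching the claim. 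I expect the main obstacle to be the careful justification of the star-shapedness and radial reduction — verifying that $D$ really has interval cross-sections and that the boundary and negative-coordinate edge cases are genuinely $\sigma$-null so the $\max(0,\cdot)$ clamp does not alter the integral; the constant computation is then routine bookkeeping with the Gamma function.
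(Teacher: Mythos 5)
Your proof is correct and follows essentially the same route as the paper's: reduce to $z=0$, integrate the dominated region in polar coordinates, identify the radial exit distance as $\max_{y\in Y}\min_i y_i/\lambda_i$, and use the clamp $\max(0,\cdot)$ to handle points with negative coordinates. The only differences are minor refinements rather than a new approach --- you justify the interval-valued radial cross-sections via star-shapedness and obtain $c_k$ directly from the $r^{k-1}$ Jacobian and the surface area of the positive orthant, whereas the paper argues the sliver volume heuristically and calibrates $c_k$ by evaluating both sides on the positive-orthant portion of a ball; both give $c_k=\pi^{k/2}/\bigl(2^k\,\Gamma(k/2+1)\bigr)$.
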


Intuitvely, this lemma says that although the maximization of any specific scalarization will bias the optimization to a certain point on the Pareto frontier, there is a specific combination of scalarizations with different weights such that the sum of the maximization of these scalarizations over $Y$ will give the hypervolume of $Y$. It is important for the maximization to be inside the expectation, implying that maximizing various randomized single-objective scalarizations could serve to also maximize the hypervolume, which is our ultimate objective. In fact, it is easy to see that the hypervolume of a set cannot be written solely as a maximization over the expectation of any scalarizations. 

Furthermore, note that our scalarization function, similar to the Chebyshev scalarization, is a minimum over coordinates $s_\lambda(y) = \underset{i}{\min} \,(\max(0,  y_i/\lambda_i))^k$.  Intuitively, this minimization captures the notion that Pareto dominance is a coordinate-wise optimality criterion, as we can redefine $x_1$ being Pareto dominated by $x_2$ as $\underset{i}{\min} f_i(x_2) - f_i(x_1) \geq 0$.

Now that we can rewrite the hypervolume as a specific expectation of maximization of random scalarizations, we proceed to show that our random estimator has controlled variance and therefore concentrates. Specifically, we show that the hypervolume indicator can be computed efficiently via this integral formulation, providing a simple and fast implementation to give an approximation to the hypervolume indicator. Our argument relies on proving smoothness properties of our hypervolume scalarizations for any $\lambda > 0$ and then applying standard concentration inequalities. We note that it is non-obvious why $s_\lambda(y)$ is smooth, since $s_\lambda(y)$ depends inversely on $\lambda_i$ so when $\lambda_i $ is small, $s_\lambda$ might change very fast. The full proof is in the appendix.

\begin{lemma}[Hypervolume Concentration]
\label{lem:concentration}
Let $Y = \set{y_1,..., y_m} \subseteq \R^k$ and $z$ be a reference point and assume for some $B
\geq 1$, we can bound $y_i \leq z + B$. Then, $s_\lambda(y)$ is $O(B^k
k^{1+k/2})-$Lipschitz for any $\lambda$ and we can independently draw
$\lambda_1, ..., \lambda_s$ such that
$$\left|\frac{1}{c_k}\HV_z(Y) - \frac{1}{s} \sum_j \max_{y \in Y} s_{\lambda_j}(y-z)\right| \leq \epsilon $$
holds with probability at least $1-\delta$ with $s = O(B^{2k}k^k\log(1/\delta)/\epsilon^2)$ samples.
\end{lemma}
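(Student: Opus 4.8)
The plan is to combine the identity from Lemma~\ref{lem:hypervolume} with a bounded-range concentration argument. Writing $g(\lambda) := \max_{y \in Y} s_\lambda(y-z)$, that lemma gives $\frac{1}{c_k}\HV_z(Y) = \E_{\lambda \sim \Sphere^{k-1}}[g(\lambda)]$, so the empirical quantity $\frac{1}{s}\sum_j \max_{y \in Y} s_{\lambda_j}(y-z) = \frac1s \sum_j g(\lambda_j)$ is precisely an average of $s$ i.i.d.\ copies of $g(\lambda)$ with $\lambda_j \sim \Sphere^{k-1}$. It therefore suffices to (i) bound the range of $g$ and (ii) invoke Hoeffding's inequality, while establishing the Lipschitz estimate as the accompanying smoothness fact.

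First I would bound the range. Since $\lambda \geq 0$ and $\norm{\lambda} = 1$, some coordinate $i_0$ satisfies $\lambda_{i_0} \geq 1/\sqrt{k}$. Because $s_\lambda$ is a minimum over coordinates and $y_i - z_i \leq B$, we get $s_\lambda(y-z) \leq (\max(0,(y_{i_0}-z_{i_0})/\lambda_{i_0}))^k \leq (B\sqrt{k})^k = B^k k^{k/2}$, while $s_\lambda \geq 0$ always. Hence $g(\lambda) \in [0,R]$ with $R = B^k k^{k/2}$ for every $\lambda$, uniformly in $Y$.

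For the Lipschitz claim I would differentiate $s_\lambda$ directly. On the open region where a single coordinate $i^*$ achieves the minimum, $s_\lambda$ depends only on $y_{i^*}$, with $\partial_{y_{i^*}} s_\lambda = (k/\lambda_{i^*})\,((y_{i^*}-z_{i^*})/\lambda_{i^*})^{k-1}$ and vanishing partials elsewhere; the minimum of these coordinatewise functions is Lipschitz with constant the worst of them. The argmin condition forces $(y_{i^*}-z_{i^*})/\lambda_{i^*} \leq B\sqrt{k}$ (comparing against the coordinate $i_0$ above), which controls the $(\cdot)^{k-1}$ factor by $(B\sqrt{k})^{k-1}$. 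The delicate point, exactly the non-obvious inverse dependence flagged before the statement, is the remaining $1/\lambda_{i^*}$ factor, which can be large when the active coordinate carries a small weight; I expect this to be the main obstacle. The resolution is to combine the two facts about the active coordinate, namely $(y_{i^*}-z_{i^*}) \leq B$ and the argmin bound, so that the gradient norm is controlled by $k(B\sqrt{k})^k = O(B^k k^{1+k/2})$; one must then check this bound uniformly across the boundaries where the active coordinate switches, where $s_\lambda$ is continuous and the one-sided gradients agree in magnitude, yielding the stated global Lipschitz constant.

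Finally I would apply Hoeffding's inequality to the i.i.d.\ bounded variables $g(\lambda_1),\ldots,g(\lambda_s) \in [0,R]$, giving $\PP[\,|\frac1s\sum_j g(\lambda_j) - \E g| \geq \epsilon\,] \leq 2\exp(-2s\epsilon^2/R^2)$. Setting the right-hand side to $\delta$ and using $R^2 = B^{2k}k^k$ yields $s = O(B^{2k}k^k \log(1/\delta)/\epsilon^2)$, and since $\E g = \frac{1}{c_k}\HV_z(Y)$ this deviation bound is exactly the claimed estimate. I would remark that only the range bound $R$ is required for the sample complexity, whereas the Lipschitz estimate is a stronger structural property certifying smoothness of the estimator (and reused later in the regret analysis), which is why it is recorded in the statement.
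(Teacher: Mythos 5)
Your proposal takes essentially the same route as the paper's proof: the uniform range bound $0 \le \max_{y\in Y} s_\lambda(y-z) \le B^k k^{k/2}$ obtained from the coordinate with $\lambda_{i_0} \ge k^{-1/2}$, a Chernoff--Hoeffding bound on the i.i.d.\ bounded empirical average, and Lemma~\ref{lem:hypervolume} to identify the mean with $\frac{1}{c_k}\HV_z(Y)$; your closing remark that only the range bound (not the Lipschitz constant) drives the sample complexity is also exactly how the paper's proof is structured.

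One focused caveat on the Lipschitz step. You correctly identify the $1/\lambda_{i^*}$ factor as the obstacle, but the ``resolution'' you assert does not actually go through: combining $y_{i^*}-z_{i^*} \le B$ with the argmin bound $(y_{i^*}-z_{i^*})/\lambda_{i^*} \le B\sqrt{k}$ yields a gradient bound of $k(B\sqrt{k})^{k-1}/\lambda_{i^*}$ (equivalently $kB^{k-1}/\lambda_{i^*}^{k}$), and neither fact supplies a lower bound on $\lambda_{i^*}$. Concretely, for $k=2$, $\lambda = (\eps, \sqrt{1-\eps^2})$ and $u = (a, B)$ with $a \in [0, \eps B]$, the map $a \mapsto (a/\eps)^2$ rises from $0$ to $B^2$ over an interval of length $\eps B$, so the Lipschitz constant is at least $B/\eps$, unbounded as $\eps \to 0$ --- the claimed bound $O(B^k k^{1+k/2})$ cannot hold uniformly ``for any $\lambda$.'' You should know, though, that the paper's own proof makes the identical leap (``the gradient \ldots is $ky_{i^*}^{k-1}/\lambda_{i^*}^k$ and therefore the Lipschitz constant is bounded by $k(Bk^{1/2})^k$''), so on this point you have faithfully reproduced the published argument, weak spot included; indeed you flag the difficulty more explicitly than the paper does. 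Since the quantitative conclusion of the lemma rests only on the range bound and Hoeffding, that part of both proofs is sound; the Lipschitz claim, which is consumed downstream in Theorem~\ref{thm:hyper_regret} via Theorem~\ref{thm:regret}, would require an additional argument, e.g., restricting the weight distribution to $\lambda$ bounded away from the coordinate hyperplanes, or replacing the worst-case constant with a bound that holds with high probability over $\lambda \sim \Sphere^{k-1}$.
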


\section{Hypervolume Regret Bounds} \label{sec:regret}

In this section, we derive novel hypervolume regret bounds for Bayesian optimization with UCB and TS acquisition functions under the hypervolume scalarizations. Furthermore, we introduce an algorithmic framework to turn any single-objective optimization algorithm to a multi-objective optimization algorithm via scalarizations and show that single-objective convergence guarantees can provably translate into multi-objective convergence bounds.

For a fixed weight $\lambda$, if $X = \set{x_1,...,x_m}$ is our dataset and recall $X^*$ is the maximal input set corresponding to the Pareto frontier $Y^*$, we define the scalarized regret to be
$$r_\lambda(X) = \max_{x \in X^*} s_\lambda(F(x)) - \max_{x \in X} s_\lambda(F(x))$$
The scalarized regret only captures the regret for a specific weighting $\lambda$ and we emphasize that scalarized regret bounds will guarantee convergence to only one point in $X^*$. To prove convergence to the whole frontier, we consider the following {\it Bayes regret} given by $R(X) = \E_{\lambda}\left[ r_\lambda(X)\right]$, which integrates over all $\lambda$ for some distribution $\D_\lambda$.

Note that Bayes regret cannot be minimized by a single point, rather it requires a set of points across the Pareto frontier to achieve a small Bayes regret. Let $X_T = \set{x_1,...,x_T}$ be the set of chosen inputs up to time $T$. Then, we wish to bound $R(X_T)$. We cannot analyze the Bayes regret directly. Rather, we bound it via a slightly surrogate regret measure. Let use define the instantaneous regret at time step $t$ as:
$$ r(x_t, \lambda_t) = \max_{x \in X^*} s_{\lambda_t}(F(x)) - s_{\lambda_t}(F(x_t))$$
The cumulative regret at time step $T$ is then $R_C(T) = \sum_{t=1}^T r(x_t, \lambda_t)$. We will first bound the cumulative regret and then use the cumulative regret to bound the Bayes regret, which is a scaled version of our hypervolume regret when using hypervolume scalarizations.

Regret bounds are derived using a metric known as the {\it maximum information gain} (MIG), which captures an information-theoretic notion of uncertainty reduction of our blackbox function. For any $A \subseteq \X$, we define the random set $y_A = \set{y_a = f(a) + \epsilon_a | a \in A}$ be our noisy evaluations. The reduction in uncertainty of the distribution over functions $f$ induced by the GP by observing $y_A$ is given by the mutual information ${\bf I}(y_A; f) = {\bf H}(f) - {\bf H}(f | y_A) = {\bf H}(y_A) - {\bf H}(y_A | f)$, where ${\bf H}$ denotes the Shannon entropy. The maximum information gain after $T$ observations is defined as :
$$\gamma_T = \max_{A \subseteq \X : |A | = T} {\bf I}(y_A; f) $$

The mutual information can also be explicitly calculated via a useful formula: ${\bf I}(y_A; f) = \frac{1}{2} \log | I + \sigma^{-2} K_A|$, where $K_A$ is the $T$-by-$T$ covariance matrix of dataset A and $|\cdot|$ is the determinant operator. Using the formula, one can derive bounds of $\gamma_T = O((\log T)^{\dim+1})$ for the squared exponential kernel and similar bounds for the Mat\'ern and linear/polynomial kernels for any $A$ \cite{srinivasgaussian}. When the maximum information gain is small, the $\GP$ function distribution induced by the corresponding is relatively easier to model and regret bounds are therefore tighter.

\begin{theorem}[Theorem 1 in \cite{paria2018flexible}]\label{thm:regret}
Let each objective $f_i(x)$ for $x \in [0,1]^n$ follow a Gaussian distribution with marginal variances bounded by 1 and observation noises $\epsilon_i \sim \N(0, \sigma_i^2)$ are independent with $\sigma_i^2 \leq \sigma^2 \leq 1$. Let $\gamma_{T, k} \leq \gamma_T$, where $\gamma_{T, k}$ is the MIG for the k-th objective. Running Algorithm~\ref{alg:scalarization} with $L$-Lipschitz scalarizations on either UCB or TS acquisition function produces an expected cumulative regret after $T$ steps that is bounded by:
$$\E_[R_C(T)] = O(L k n^{1/2} [  \gamma_T T \ln(T)]^{1/2}) $$

where the expectation is over choice of $\lambda_t$ and $\GP$ measure.
\end{theorem}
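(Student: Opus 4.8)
The plan is to bound the cumulative regret $R_C(T) = \sum_{t=1}^T r(x_t,\lambda_t)$ by reducing the scalarized instantaneous regret to a sum of per-objective posterior standard deviations, and then to control that sum through the maximum information gain $\gamma_T$. Both acquisition rules share one skeleton: an \emph{optimism} step that upper bounds $r(x_t,\lambda_t)$ by the posterior confidence width at the chosen point, followed by the standard information-gain summation \cite{srinivasgaussian}. The $L$-Lipschitz hypothesis on $s_\lambda$ is what transfers the scalar regret to the vector of per-coordinate uncertainties; because the paper's Lipschitz condition is with respect to $\norm{\cdot}_1$, the $k$ objectives enter additively and produce the linear factor $k$. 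We also use that $s_\lambda$ is monotone increasing in each coordinate, which is what makes the coordinatewise upper confidence bounds propagate through the scalarization.

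For UCB I would first establish that, with probability at least $1-\delta$ simultaneously over all $t \leq T$, all $x \in \X$, and all objectives $i \leq k$, the truth lies in its band $|\mu_{i,t-1}(x) - f_i(x)| \leq \sqrt{\beta_t}\,\sigma_{i,t-1}(x)$ for a suitable schedule $\beta_t$. Writing the acquisition vector $A_i(x) = \mu_{i,t-1}(x) + \sqrt{\beta_t}\,\sigma_{i,t-1}(x)$ and letting $x^*$ maximize $s_{\lambda_t}(F(\cdot))$, monotonicity gives optimism $s_{\lambda_t}(F(x^*)) \leq s_{\lambda_t}(A(x^*))$ while the maximizing choice of $x_t$ gives $s_{\lambda_t}(A(x^*)) \leq s_{\lambda_t}(A(x_t))$; hence $r(x_t,\lambda_t) \leq s_{\lambda_t}(A(x_t)) - s_{\lambda_t}(F(x_t))$. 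The Lipschitz property then bounds this by $L\norm{A(x_t)-F(x_t)}_1 \leq 2L\sqrt{\beta_t}\sum_{i=1}^k \sigma_{i,t-1}(x_t)$. Summing over $t$, applying Cauchy--Schwarz coordinatewise, and invoking $\sum_{t=1}^T \sigma_{i,t-1}^2(x_t) = O(\gamma_T)$ yields $\sum_{t=1}^T \sigma_{i,t-1}(x_t) = O((T\gamma_T)^{1/2})$ for each $i$, so that on the good event $R_C(T) = O(Lk\,(\beta_T T\gamma_T)^{1/2})$; the complementary event contributes only lower-order expected regret since the objectives are bounded.

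The TS case follows the posterior-sampling decomposition of Russo and Van Roy \cite{russo2014learning}: conditioned on the history $\mathcal{H}_{t-1}$ and on $\lambda_t$, the Thompson-sampled point $x_t$ (the maximizer of a posterior draw) and the true maximizer $x^*$ of $s_{\lambda_t}(F(\cdot))$ are identically distributed, so $\E[U(x^*)\mid\mathcal{H}_{t-1},\lambda_t] = \E[U(x_t)\mid\mathcal{H}_{t-1},\lambda_t]$ for the same history-measurable upper confidence function $U = s_{\lambda_t}\circ A$. Inserting this $U$ and cancelling the two equal terms leaves exactly the optimism and confidence-width contributions of the UCB analysis, so the expected TS regret matches the UCB bound up to constants. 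In both cases the final step is to substitute $\beta_T = O(n\ln T)$: since $\sqrt{\beta_T} = O(n^{1/2}(\ln T)^{1/2})$, the bound becomes $O(Lk\,n^{1/2}[\gamma_T T\ln T]^{1/2})$ as claimed.

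I expect the main obstacle to be justifying the uniform confidence bound over the \emph{continuous} domain $\X$ with the stated $\beta_t = O(n\ln t)$, rather than over a finite set. A direct union bound covers only finitely many points, so I would discretize $\X$ at a resolution shrinking polynomially in $t$, pay a union bound over that grid, and then transfer the bound to all of $\X$ using Lipschitz continuity of the posterior mean and standard deviation (inherited from the smoothness of GP sample paths), at the cost of only a lower-order additive error. The grid has $\sim T^{\,n}$ cells, so the logarithm of its size injects the $n\ln T$ into $\beta_T$ and hence the $n^{1/2}$ into the rate; ensuring this discretization error stays genuinely lower-order, uniformly over the random draw of $\lambda_t$, is the delicate point of the argument.
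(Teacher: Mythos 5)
The paper itself contains no proof of Theorem~\ref{thm:regret}: it is imported verbatim as Theorem~1 of \cite{paria2018flexible}, so the only meaningful comparison is against that source. Your proposal reconstructs essentially the argument given there: the $\norm{\cdot}_1$-Lipschitz property of $s_\lambda$ transfers scalarized regret to a sum of $k$ per-objective confidence widths (hence the linear factor $k$), the Russo--Van Roy distributional identity \cite{russo2014learning} reduces TS to the UCB analysis conditioned on $\mathcal{H}_{t-1}$ and $\lambda_t$, the information-gain summation with Cauchy--Schwarz from \cite{srinivasgaussian} gives $O((T\gamma_T)^{1/2})$ per objective, and the discretization of the continuous domain is what injects $\beta_T = O(n \ln T)$ and hence the $n^{1/2}$. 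This is the right skeleton and the right accounting of where each factor in the bound comes from.

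One step of your UCB argument has a genuine soft spot: you dismiss the complement of the good confidence event on the grounds that ``the objectives are bounded.'' In the Bayesian setting of the theorem the $f_i$ are GP samples with marginal variance at most $1$; they are almost surely bounded on the compact domain but not uniformly bounded, so the bad event cannot be charged a deterministic regret. The standard repairs are either to run the whole argument in expectation --- since $\E[f_i(x) - \mu_{i,t-1}(x) \mid \mathcal{H}_{t-1}] = 0$, one bounds $\E[(s_{\lambda_t}(F(x^*)) - s_{\lambda_t}(A(x^*)))_+]$ directly via Gaussian tail integrals, which is how \cite{paria2018flexible} and \cite{russo2014learning} proceed and which avoids any high-probability event --- or to apply Cauchy--Schwarz against a finite bound on $\E[\sup_x f_i(x)^2]$. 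Relatedly, the discretization-plus-transfer step you flag as delicate requires a sample-path smoothness assumption on the kernel (high-probability Lipschitz bounds on $\partial f_i / \partial x_j$, as in \cite{srinivasgaussian}); this hypothesis is implicit in the theorem as quoted, and your proof should state it, since posterior-mean smoothness alone does not control $f_i$ between grid points.
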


\begin{theorem}
\label{thm:hyper_regret}
Assume the conditions in Theorem~\ref{thm:regret} holds and let $F(\X) \subseteq [0, 2/5]^{k}$ and $z \geq 0$ and $Y_t = F(X_t)$. Running Algorithm~\ref{alg:scalarization} with hypervolume scalarizations with reference point $z$ and $D_\lambda = \Sphere^{k-1}$ on either UCB or TS acqusition function produces hypervolume regret after $T$ observations that is bounded by:
$$\sum_{t=1}^T (\HV_z(Y^* ) - \HV_z(Y_t)) \leq O(k^2 n^{1/2} [ \gamma_T T\ln(T)]^{1/2})$$
Furthermore, $\HV_z(Y_T) \geq \HV_z(Y^*) - \epsilon_T$, where $\epsilon_T = O(k^2 n^{1/2} [ \gamma_T \ln(T)/T]^{1/2})$.
\end{theorem}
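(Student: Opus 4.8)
The plan is to pass from the geometric hypervolume regret to the scalarized Bayes regret via Lemma~\ref{lem:hypervolume}, and then invoke the cumulative regret bound of Theorem~\ref{thm:regret} with the Lipschitz constant supplied by Lemma~\ref{lem:concentration}. First I would apply Lemma~\ref{lem:hypervolume} to both $Y^* = F(X^*)$ and $Y_t = F(X_t)$ with the scalarization $s_\lambda(\cdot - z)$, folding the reference shift into the scalarization argument. This yields the exact per-step identity $\HV_z(Y^*) - \HV_z(Y_t) = c_k \E_{\lambda \sim \Sphere^{k-1}}[r_\lambda(X_t)] = c_k R(X_t)$, so that summing over $t$ reduces the target to bounding $c_k \sum_{t=1}^T R(X_t)$, i.e.\ the cumulative Bayes regret under $\D_\lambda = \Sphere^{k-1}$.

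Next I would connect the Bayes regret to the instantaneous regret that drives Theorem~\ref{thm:regret}. Since $x_t \in X_t$, for every weight $\lambda$ we have $\max_{x \in X_t} s_\lambda(F(x)-z) \ge s_\lambda(F(x_t)-z)$, hence $r_\lambda(X_t) \le \max_{x \in X^*} s_\lambda(F(x)-z) - s_\lambda(F(x_t)-z)$; combined with the set-monotonicity $r_\lambda(X_t) \le r_\lambda(X_{t-1})$ and the independence of each fresh $\lambda_t \sim \D_\lambda$ from the past set $X_{t-1}$, this relates $\sum_t R(X_t)$ to the cumulative instantaneous regret $R_C(T) = \sum_t r(x_t,\lambda_t)$ in expectation. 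I would then apply Theorem~\ref{thm:regret} to get $\E[R_C(T)] = O(Lkn^{1/2}[\gamma_T T \ln T]^{1/2})$, where $L$ is the Lipschitz constant of $s_\lambda$ from Lemma~\ref{lem:concentration}: under $F(\X) \subseteq [0,2/5]^k$ and $z \ge 0$ we have $F(x)-z \in [0,2/5]^k$, so $B = 2/5$ and $L = O(B^k k^{1+k/2})$.

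The decisive bookkeeping is to show $c_k L = O(k)$, which is exactly why the box $[0,2/5]^k$ appears in the hypotheses. Substituting $c_k = \pi^{k/2}/(2^k\,\Gamma(k/2+1))$ and $L = O((2/5)^k k^{1+k/2})$, I would use Stirling's bound $\Gamma(k/2+1) \ge (k/2e)^{k/2}$ to cancel the $k^{k/2}$ growth of $L$ against the factorial decay of $c_k$, while the surviving geometric factor stays bounded because $B=2/5$ is chosen small enough; what remains is $O(k)$. Multiplying the cumulative bound by $c_k$ then gives $\sum_t (\HV_z(Y^*) - \HV_z(Y_t)) \le c_k\,\E[R_C(T)] = O(k^2 n^{1/2}[\gamma_T T\ln T]^{1/2})$. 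The ``furthermore'' statement follows from monotonicity: since $Y_t \subseteq Y_{t+1}$, the map $t \mapsto \HV_z(Y_t)$ is non-decreasing and the per-step regret is non-increasing, so the last term is at most the running average, giving $\HV_z(Y^*) - \HV_z(Y_T) \le \frac{1}{T}\sum_{t=1}^T(\HV_z(Y^*) - \HV_z(Y_t)) = O(k^2 n^{1/2}[\gamma_T \ln T / T]^{1/2}) =: \epsilon_T$.

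I expect the main obstacle to be the second step: rigorously passing from the fresh-weight Bayes regret $R(X_t) = \E_{\lambda}[r_\lambda(X_t)]$ that appears in the hypervolume identity to the matched instantaneous regret $r(x_t,\lambda_t)$ that Theorem~\ref{thm:regret} actually controls, since $x_t$ is chosen using the very weight $\lambda_t$ and is therefore correlated with it. Handling this correlation cleanly --- through the independence of $\lambda_t$ from $X_{t-1}$ together with the monotonicity of $r_\lambda$ in the set argument --- is the conceptual crux, whereas the constant cancellation establishing $c_k L = O(k)$ is the main technical (rather than conceptual) hurdle.
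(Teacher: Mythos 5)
Your proposal is correct and follows essentially the same route as the paper's own proof: the exact identity $\HV_z(Y^*) - \HV_z(Y_t) = c_k R(X_t)$ via Lemma~\ref{lem:hypervolume}, the pointwise bound $r_\lambda(X_t) \le r(x_t,\lambda)$ from $x_t \in X_t$, Theorem~\ref{thm:regret} with the Lipschitz constant $L = O(B^k k^{1+k/2})$ from Lemma~\ref{lem:concentration}, the cancellation $c_k L \le k$ enabled by $B = 2/5$ and Stirling-type bounds on $\Gamma$, and monotonicity plus averaging for the ``furthermore'' claim. The matched-versus-fresh-$\lambda$ correlation you identify as the conceptual crux is actually passed over just as quickly in the paper's proof, which sums $\E_{\lambda}[r(x_t,\lambda)]$ directly into $\E[R_C(T)]$ without further comment, so your explicit flagging of that step makes your write-up, if anything, the more careful of the two.
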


\begin{proof}
WLOG, let $z = 0$. From Lemma~\ref{lem:hypervolume}, we see that for our hypervolume scalarization, we can rewrite our regret using the Bayes regret.
\begin{align*}
    \HV_z&(Y^* ) - \HV_z(Y_t) \\
    &= c_k \E_{\lambda\sim \Sphere^{k-1}}\left[ \max_{x \in X^*} s_\lambda(F(x)) - \max_{x \in X_t} s_\lambda(F(x))\right] \\
    &= c_k\E_{\lambda\sim \Sphere^{k-1}}\left[r_\lambda(X_t)\right] \\
    &= c_k R(X_t)\\
\end{align*}
Therefore, our Bayes regret is exactly a scaled version of our hypervolume regret with $c_k = \pi^{k/2}/(2^k \Gamma(k/2 +1)) \leq (\pi^{k/2}e^{k/2})/(2^k (k/2+1)^{k/2})$ with standard bounds on $\Gamma$. Next, we use the instantaneous regret to bound Bayes risk and note that $\lambda_t \sim \Sphere^{k-1}$.
\begin{align*}
    R(X_t) &=  \E_{\lambda\sim \Sphere^{k-1}}\left[ \max_{x \in X^*} s_\lambda(F(x)) - \max_{x \in X_t} s_\lambda(F(x))\right] \\
    &\leq   \E_{\lambda\sim \Sphere^{k-1}}\left[ \max_{x \in X^*} s_\lambda(F(x)) -  s_\lambda(F(x_t))\right] \\
    &\leq  \E_{\lambda}\left[ r(x_t, \lambda)\right]
\end{align*}
Therefore, by Theorem~\ref{thm:regret}, we conclude that we can bound the Bayes and hypervolume regret by observing the following relations:
\begin{align*}
    \sum_{i=1}^T R(X_t) &\leq \E_{\lambda_1,...,\lambda_t}\left[\sum_{i=1}^T r(x_t, \lambda_t)\right] \\ 
    &\leq  \E[R_C(T)] \\
    &= O(Lk n^{1/2} [  \gamma_T T \ln(T)]^{1/2})\\
\end{align*}
Lastly, by Lemma~\ref{lem:concentration}, we see that $L \leq 2^{-k}k^{1+k/2}$ and note that $c_k L \leq k(\pi^{k/2}e^{k/2})/(5^k) \leq k$. Together, we finally conclude that
$$\sum_{t=1}^T (\HV_z(Y^* ) - \HV_z(Y_t)) \leq O(k^2 n^{1/2} [ \gamma_T T\ln(T)]^{1/2})$$
For the final claim, by definition of $\HV_z$ and our equivalence between $\HV_z$ and $R$, we conclude that $R(X_t)$ must be monotonically decreasing. Our final claim follows from monotonicity and simple algebra.
\end{proof}

We note that our regret bounds hold for classical Bayesian optimization procedures that are widely used with no artificial modifications. Furthermore, we can generalize our results to show that for any single-objective optimization procedure, one can use hypervolume scalarizations to convert the procedure into a multi-objective optimization procedure via a natural extension (Algorithm~\ref{alg:scalarization_general}). More remarkably, if the single-objective optimization procedure admits convergence bounds, we can immediately derive hypervolume convergence bounds by appealing to our previous connections between scalarization and hypervolume via the following theorem. Practically, this implies that any currently used single-objective optimization algorithms can be easily generalized to the multi-objective setting with minimal effort and enjoy provable convergence bounds. The full proof is in the appendix.

\begin{algorithm}[tb]
   \caption{Scalarization with General Single-Objective Optimization}
   \label{alg:scalarization_general}
   \SetKwInOut{Input}{Input}
   \Input{$F: \R^\dim \to \R^k$: multi-objective function , $T \in \Z_+$: number of iterations, $\alg: $ single-objective optimization algorithm, $\D_\lambda$: distribution to sample $\lambda$ for known $s_\lambda(x)$, $l \in \Z_+$: number of scalarization samples}
   \For{$i=1$ {\bfseries to} $l$}{
   \textbf{Sample: } Draw independently $\lambda_i \sim \D_\lambda$.\\
   \textbf{Optimize: } Run $\alg$ on single-objective function $s_{\lambda_i}(F(x))$ for $T$ iterations to obtain points $\set{x_{\lambda_i}^{(1)}, x_{\lambda_i}^{(2)},...,x_{\lambda_i}^{(T)}}$
   }
   \Return{ $\underset{i}{\bigcup} \set{x_{\lambda_i}^{(t)}}_{t=1}^T$ }
\end{algorithm}

\begin{theorem}
\label{thm:hyper_regret_general}
Let $F(\X) \subseteq [0, B]^{k}$ and $z \geq 0$ and let $\alg$ be any single-objective maximization algorithm on objective function $g(x)$ that guarantees that after $T$ iterations, it returns $x_T$ such that  $g(x_T) \geq  g(x^*)  - \epsilon_T$, where $x^*$ is the optima. Then, running Algorithm~\ref{alg:scalarization_general} on hypervolume scalarizations with reference point $z$ and $D_\lambda = \Sphere^{k-1}$ with $l = \widetilde{O}((2B)^{k^2} k^{k+1}/(c_k^{k-1}\epsilon_T^{k+1}))$ converges to the Pareto frontier and after $l\cdot T$ observations, we have
$$\HV_z(Y_T) \geq \HV_z(Y^*) - 5 c_k \epsilon_T$$
\end{theorem}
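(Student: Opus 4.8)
The plan is to reduce everything to the scalarization identity of Lemma~\ref{lem:hypervolume} and then control three separate sources of error. Assume without loss of generality that $z=0$, so $F(\X)\subseteq[0,B]^k$ and every $F(x)\geq z$. Writing the normalized scalarized value $h_Y(\lambda)=\max_{y\in Y} s_\lambda(y)$, Lemma~\ref{lem:hypervolume} gives $\HV_z(Y)=c_k\,\E_{\lambda\sim\Sphere^{k-1}}[h_Y(\lambda)]$. Since Algorithm~\ref{alg:scalarization_general} already draws $\lambda_1,\dots,\lambda_l\sim\Sphere^{k-1}$, I would introduce the empirical estimator $\widehat{\HV}(Y)=\frac{c_k}{l}\sum_{i=1}^l h_Y(\lambda_i)$ built from the \emph{same} samples, and telescope
$$\HV_z(Y^*)-\HV_z(Y_T)=\big[\HV_z(Y^*)-\widehat{\HV}(Y^*)\big]+\big[\widehat{\HV}(Y^*)-\widehat{\HV}(Y_T)\big]+\big[\widehat{\HV}(Y_T)-\HV_z(Y_T)\big].$$

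The middle term is the clean part, and it is where the per-direction guarantee of $\alg$ enters. For each $i$, running $\alg$ on $g(x)=s_{\lambda_i}(F(x))$ returns a point $x_{\lambda_i}^{(T)}$ with $s_{\lambda_i}(F(x_{\lambda_i}^{(T)}))\geq \max_{x\in\X}s_{\lambda_i}(F(x))-\epsilon_T=\max_{x\in X^*}s_{\lambda_i}(F(x))-\epsilon_T$, where the last equality uses monotonicity of $s_\lambda$ (every objective vector is dominated by a Pareto-optimal one, so the unconstrained and Pareto-restricted maxima agree). Because $x_{\lambda_i}^{(T)}$ lies in the returned set, $h_{Y_T}(\lambda_i)\geq h_{Y^*}(\lambda_i)-\epsilon_T$ for every sample, so summing gives $\widehat{\HV}(Y^*)-\widehat{\HV}(Y_T)\leq c_k\epsilon_T$ deterministically, with no concentration required. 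For the first bracket, $Y^*$ is fixed and independent of the samples, so $\lambda\mapsto h_{Y^*}(\lambda)$ is a fixed bounded map (indeed $0\leq h_{Y^*}\leq B^k k^{k/2}$, since $\|\lambda\|=1$ forces some $\lambda_j\geq 1/\sqrt{k}$), and Lemma~\ref{lem:concentration} controls $|\HV_z(Y^*)-\widehat{\HV}(Y^*)|\leq c_k\epsilon_T$ once $l=\Omega(B^{2k}k^k\log(1/\delta)/\epsilon_T^2)$.

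The main obstacle is the third bracket, $\widehat{\HV}(Y_T)-\HV_z(Y_T)$, because $Y_T$ is itself a function of the very samples $\lambda_i$ used to form $\widehat{\HV}$; the summands are no longer independent of the set, and the empirical hypervolume can overestimate. My plan is to upgrade to a uniform bound, controlling $\sup_Y|\widehat{\HV}(Y)-\HV_z(Y)|$ over all realizable outputs $Y\subseteq F(\X)\subseteq[0,B]^k$. Using that $s_\lambda$ is $O(B^k k^{1+k/2})$-Lipschitz in $y$ (Lemma~\ref{lem:concentration}), I would lay an $\eta$-net over $[0,B]^k$ with $\eta\asymp \epsilon_T/(B^k k^{1+k/2})$, so that replacing $Y$ by its nearest net points perturbs $h_Y$ pointwise by at most $\epsilon_T$, and then union-bound the fixed-set concentration over the finitely many net-supported sets. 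The net has $(B/\eta)^k=O((B^{k+1}k^{1+k/2}/\epsilon_T)^k)$ points, and this combinatorial blow-up, multiplied by the $B^{2k}k^k/\epsilon_T^2$ factor from concentration, is precisely what forces the stated $l=\widetilde{O}((2B)^{k^2}k^{k+1}/(c_k^{k-1}\epsilon_T^{k+1}))$; checking that the net-plus-union bookkeeping collapses to exactly this expression (the $c_k^{k-1}$ arising from translating $\epsilon$ between $\HV$ and $\frac{1}{c_k}\HV$ scales) is the technical heart of the argument.

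Finally I would assemble the three pieces: each is at most $c_k\epsilon_T$ with high probability, so telescoping plus the constant slack yields $\HV_z(Y_T)\geq\HV_z(Y^*)-5c_k\epsilon_T$. Since the hypervolume indicator is strictly Pareto-compliant, this vanishing gap forces $Y_T$ toward $Y^*$, giving convergence to the Pareto frontier as $\epsilon_T\to 0$.
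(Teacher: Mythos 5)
Your overall architecture matches the paper's proof: the paper controls the same three error sources (the per-scalarization guarantee of $\alg$, concentration of the empirical scalarized maximum for the fixed set $Y^*$, and a discretization-plus-union-bound to handle the sample-dependent set $Y_T$), and your deterministic treatment of the middle bracket is, if anything, cleaner than the paper's averaged version of the same step. The genuine gap is exactly where you flag it, in the third bracket, and your claim that the net-plus-union bookkeeping ``is precisely what forces the stated $l$'' is not correct as written. If you union-bound over all net-supported subsets of an $\eta$-net of $[0,B]^k$ with $\eta \asymp \epsilon_T/(B^k k^{1+k/2})$, the number of candidate sets is $2^{(B/\eta)^k}$, so Lemma~\ref{lem:concentration} with this union bound forces $l = \Omega\bigl(B^{2k}k^k (B/\eta)^k/\epsilon_T^2\bigr)$, which scales like $\epsilon_T^{-(k+2)}$ with $B^{k^2+3k}$-type constants --- an extra $1/\epsilon_T$ factor and larger powers of $B$ and $k$ than the theorem's $\widetilde{O}((2B)^{k^2} k^{k+1}/(c_k^{k-1}\epsilon_T^{k+1}))$. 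Your proposal is missing the combinatorial observation that makes the union bound affordable at the stated rate.

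The paper's key idea is that both $\HV_z(Y)$ and $\max_{y \in Y} s_\lambda(y)$ depend only on the Pareto frontier of $Y$, and any Pareto frontier is a monotone staircase: it intersects at most $(2B/\Delta)^{k-1}$ of the $(B/\Delta)^k$ grid cells of spacing $\Delta$. Hence every realizable $Y_T$ can be rounded to a set $S_u$ of at most $(2B/\Delta)^{k-1}$ lattice points with $|\HV_z(Y_T) - \HV_z(S_u)| \leq (2B)^k \Delta$, and the union bound needs to cover only the at most $(B/\Delta)^{k(2B/\Delta)^{k-1}}$ such rounded sets, whose log-cardinality is $\widetilde{O}((2B/\Delta)^{k-1})$ rather than your $(B/\eta)^k$. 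Plugging $\Delta = c_k \epsilon_T (2B)^{-k}$ into $l = O(B^{2k} k^{k+1} (2B/\Delta)^{k-1} \log(B/\Delta)/\epsilon_T^2)$ is what produces exactly the $(2B)^{k^2}$, $c_k^{-(k-1)}$, and $\epsilon_T^{-(k+1)}$ factors in the statement. So to repair your argument, replace ``all net-supported sets'' with this staircase (antichain) counting; your three-term telescoping and the handling of the first two brackets then go through essentially verbatim, with all the accumulated constants absorbed into the $5 c_k \epsilon_T$ slack.
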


\section{Experiments}

We empirically demonstrate the utility of hypervolume scalarizations by running our proposed multiobjective algorithms on the Black-Box Optimization Benchmark (BBOB) functions, which can be paired up into multiple bi-objective optimization problems \cite{tuvsar2016coco}. To emphasize the utility of a new theoretically sound scalarization, we focus on scalarization-related algorithms and comparisons were not made to the vast array of diverse multi-objective blackbox optimization used in certain practical settings. Therefore, our goal is therefore to compare scalarizations on commonly used optimization algorithms, such as UCB and evolutionary strategies. Furthermore, we note that scalarized algorithms have very fast practical runtimes and are especially relevant in settings with a low computational budget. 

Our objectives are given by BBOB functions, which are usually non-negative and are minimized. The input space is always a compact hypercube $[-5,5]^n$ and the global minima is often at the origin. For bi-objective optimization, given two different BBOB functions $f_1, f_2$, we attempt to maximize the hypervolume spanned by $(-f_1(x_i), -f_2(x_i))$ over choices of inputs $x_i$ with respect to the reference point $(-5, -5)$. Therefore, all relevant output points are contained in the square between $(-5, -5)$ and $(0,0)$, giving a maximum hypervolume of 25. Because BBOB functions can drastically different ranges, we first normalize the function by a measure of standard deviation computed by taking the empirical variance over a determinsitic set of $30$ different inputs. We also apply large random shifts/rotations as well as allow for adding moderate random observation noise to the objective function.
\begin{figure}[H]
\vskip 0.2in
\begin{center}
\centerline{\includegraphics[width=3in]{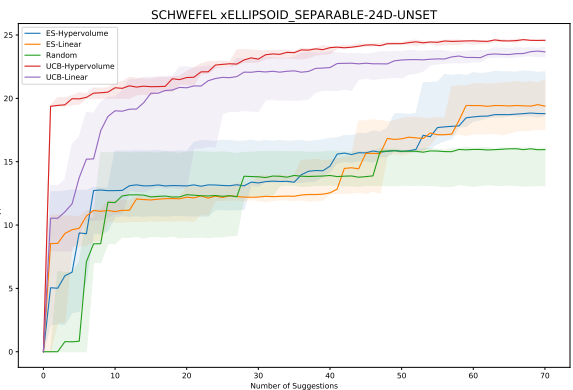}}
\centerline{\includegraphics[width=3.1in]{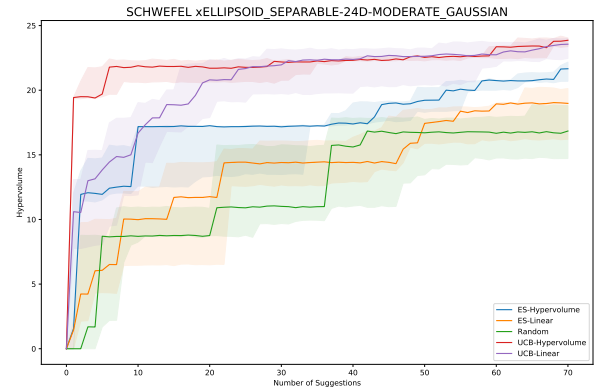}}
\caption{Dominated hypervolume plot of bi-objective optimization of SCHWEFEL and ELLIPSOID BBOB functions in 24D with no and Gaussian observation noise. Notice that Hypervolume scalarization is slightly better than Linear scalarization in both UCB and ES algorithms and this is more stark in the noiseless case.}
\label{fig:hypervolume_plot3}
\end{center}
\vskip -0.2in
\end{figure}

We run each of our algorithms in dimensions $n = 8, 16, 24$ and optimize for $70$ iterations with $5$ repeats. Our algorithms are the Random algorithm, UCB algorithm, and Evolutionary Strategy (ES). Our scalarizations include the linear and hypervolume scalarization with the weight distribution $D_\lambda$ as $\Sphere^1$. Note that for brevity, we do not include the Chebyshev scalarization because it is almost a monotonic transformation of the hypervolume scalarization with a different weight distribution. We run the UCB algorithm via an implementation of Algorithm~\ref{alg:scalarization} with a constant standard deviation multiplier of $1.8$ and a standard Mat\'ern kernel, while we run the ES algorithms using Algorthm~\ref{alg:scalarization_general} with $T  = 1$ and $l = 70$ by relying on a well-known single-objective evolutionary strategy known as Eagle \cite{yang2010eagle}.

From our results, there is a clear trend that UCB algorithms outperform both ES and Random algorithms in all cases, empirically affirming the utility of Bayesian optimization and its strong theoretical regret bounds. We believe that this is because evolutionary strategies tend to follow local descent procedures and get stuck at local minimas, therefore implicitly having less explorative mechanisms. Meanwhile, Bayesian optimization inherently promotes exploration via usage of its standard deviation estimates.

Furthermore, we see that the hypervolume scalarization slightly outperforms the linear scalarization, with UCB-Hypervolume being a clear winner in certain cases, even with Gaussian observation noise (see Fig~\ref{fig:hypervolume_plot1}). The superior performance of the hypervolume scalarization is seen in both UCB and ES algorithms (see Fig~\ref{fig:hypervolume_plot3}), and using the hypervolume scalarization is often never worse than using the linear scalarization.  We note that the difference is more stark when there is no noise added, allowing for less variance when calculating the scalarization. Also, we note that when the Pareto frontier is almost convex, the difference in performance becomes hard to observe. As seen from the Pareto plot in Fig~\ref{fig:hypervolume_plot1}, we see that although UCB-Hypervolume does produce a better Pareto frontier than UCB-Linear, the convex nature of the Pareto frontier allows linear scalarizations to perform decently. A complete profile of the plots are given in the appendix.

\begin{figure}[h]
\vskip 0.2in
\begin{center}
\centerline{\includegraphics[width=3in]{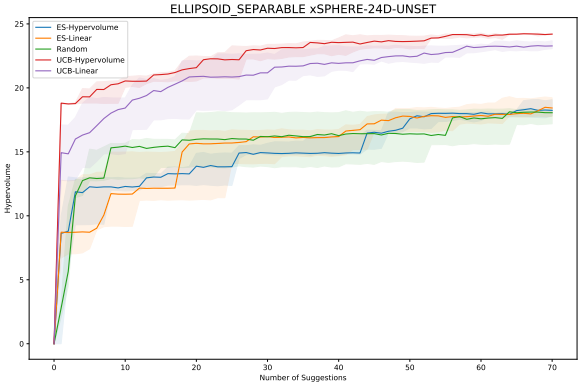}}
\centerline{\includegraphics[width=3.1in]{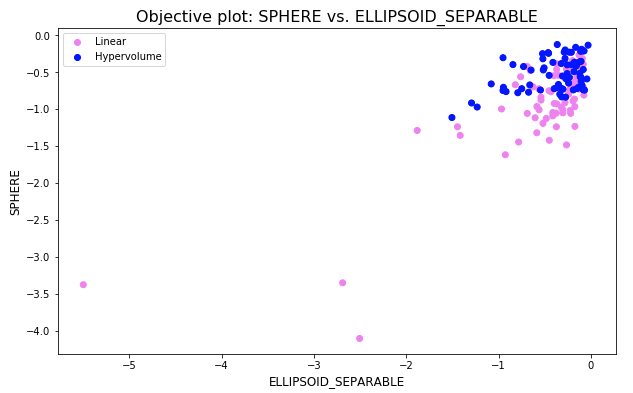}}
\caption{Dominated hypervolume and Pareto plot of bi-objective optimization of ELLIPSOID and SPHERE BBOB functions in 24D with no (UNSET) observation noise. Notice that the UCB algorithms outperform the ES algorithms and the Hypervolume scalarization is slightly better. Also, when comparing UCB-Hypervolume (blue points) and UCB-Linear (pink points) in the Pareto plot, the Hypervolume scalarization is slightly better and never suggests points that are significantly sub-optimal, although the difference is not very stark because of the convex nature of the Pareto frontier.}
\label{fig:hypervolume_plot1}
\end{center}
\vskip -0.2in
\end{figure}

\section{Conclusion}
We introduced the hypervolume scalarization functions and utilized its connection to the hypervolume indicator to derive hypervolume regret bounds for many classes of multi-objective optimization algorithms that rely on single-objective subroutines. Even though our scalarization function enjoys smoothness and concentration bounds, we note that possible improvements to the scalarizations can be made for variance reduction. We believe that a lower-variance and better-concentrated scalarization can be constructed that also can similar provable hypervolume error guarantees. Furthermore, it is conceivable that variance reduction techniques during the optimization process can be applied to achieve better concentration and convergence.

Furthermore, our regret bounds can likely be improved, especially the general regret bound for any single-objective optimization procedure given by Theorem~\ref{thm:hyper_regret_general}. We believe the exponential dependence on $k^2$ can be improved, as well as removing the extra $l$ factor completely. However, to achieve these better convergence bounds, it will most likely require algorithmic changes. Lastly, our experiments could be improved by concocting a specific optimization with a concave Pareto frontier, looking at optimization with more than two objectives, or considering a more diverse set of algorithms. Understanding and quantifying the full impact of using random scalarizations to generalize single-objective algorithms to multi-objective algorithms is an open problem.

\newpage

\bibliographystyle{alpha}
\bibliography{main}

\appendix 
\newpage
\section{Missing Proofs}

\begin{proof}[Proof of Lemma~\ref{lem:hypervolume}(Hypervolume as Scalarization)]
  Without loss of generality, let $z = 0$ be the origin and consider computing
  the volume of a rectangle with corners at the origin and at $y = (y_1,...,y_k)
  \geq 0$, but doing so in polar coordinates centered at $z$. Given a direction
  $v \in \Sphere^{k-1}$ with $v_i \ge 0$ for all $i$, and $\|v\| = 1$, lets
  suppose a ray in the direction of $v$ exits the rectangle at a point $p = cv$
  so that $\| p\| = c$.  We claim that $\| p\| = \min_i (y_i / v_i)$. Note that this claim holds in any norm; however for the eventual dominated hypervolume calculation to hold, we use the $\ell_2$ norm.

  Note that by definition of $p$ we have $p_j \le y_j$ for all $j$ and
  there must exist $i$ such that $p_i = y_i$.
  Also $p_j/v_j = c = p_i/v_i$ for any $i, j$ since $p = cv$.
  It follows that $c \le y_j / v_j$ for all $j$ and $c = y_i / v_i$, which
  proves $\| p\| = \min_i (y_i / v_i)$.



Integrating in polar coordinates, we can approximate an volume via radial slivers of the circle, which for a radius $r$ sweeping through angles $d\theta$ have an volume proportional to $r^k d\theta$. Hence, the volume of the  of the rectangle is
$$\operatorname{vol}(R) = c_k \int_{v \in \Sphere^{k-1}} \min_i \left( \frac{y_i}{v_i}\right)^k d\theta(v) $$
under a uniform measure $\theta$, where the $c_k$ is a constant that depends only on the dimension.

So far we assumed $y \geq 0$. Now, if any $y_i < 0$, then our total dominated hypervolume is zero and $\min_i \frac{y_i}{v_i} < 0$. So, by changing our scalarization slightly, we can account for any $y$ and the volume of the rectangle with respect to the origin is given by:
$$ \operatorname{vol}(R) = c_k \int_{v \in \Sphere^{k-1}} \min_i \left(\max(0, y_i/ v_i)\right)^k d\theta(v) $$
By definition of dominated hypervolume, note that $\HV_z(Y) = \operatorname{vol}(S)$ where
$$S = \set{ x \, | \, x \geq z,  x \text{ is dominated by some } y \in Y}.$$
Since $S$ is simply the union of rectangles at $y_1,...,y_m$ and note that wherever $p$ exists $S$, the length of $p$ is the maximal over all rectangles and so
$$\|p \| = \max_{y \in Y} \min_{i}  \max(0, y_i/v_i).$$
Repeating the argument gives:
$$\operatorname{vol}(S) = c_k \int_{v \in \Sphere^{k-1}} \max_{y \in Y} \left [\min_i \left(\max(0, y_i/ v_i)\right)^k \right] d\theta(v) $$
To calculate $c_k$, we simply evaluate the hypervolume of the $k$-dimensional ball in the positive orthant. If the ball has radius and is centered at the origin. In this case we get $\int_{v \in \Sphere^{k-1}} r^k d\mu(v) = r^k$ and $c_k \int_{v \in \Sphere^{k-1}} r^k d\mu(v) = V_k(r) / 2^k$ where $V_k(r)$ is defined as the volume of the $k$-dimensional ball of radius $r$, which is
$\pi^{k/2}r^k \ / \ \Gamma(k/2+1)$.
The formula for $c_k$ then follows from some basic algebra.
\end{proof}

\begin{proof}[Proof of Lemma~\ref{lem:concentration} (Hypervolume Concentration)]
Recall $s_\lambda(y) = \underset{i}{\min} (\max(0, y_i/\lambda_i))^k$ and $\|\lambda\| = 1$. Note that there must exists $i$ such that $\lambda_i \geq k^{-1/2}$ and therefore $\paren{\max(0, y_{i^*}/\lambda_{i^*})}^k \leq (Bk^{1/2})^k$ where $i^*$ is the index that minimizes $\max(0, y_i/\lambda_i)^k$. Note that the gradient of $s_\lambda(y)$, if non-zero, is $ky_{i^*}^{k-1}/\lambda_{i^*}^k$ and therefore, the Lipschitz constant is bounded by $k (Bk^{1/2})^k = B^k k^{1+k/2}$.

Since $0 \leq s_{\lambda}(y - z) \leq B^k k^{k/2}$ for any $\lambda$ and $y$, we conclude by standard Chernoff bounds that if weight vectors $\lambda_j$ are independent samples,

$$ \Pr\paren{ |\E_{\lambda \sim \Sphere^{k-1}}[\max_{y\in Y} s_\lambda (y-z)] - \frac{1}{s} \sum_j \max_{y \in Y} s_{\lambda_j}(y-z)| \geq \epsilon }$$ $$\leq 2\exp(- 2s\epsilon^2/(B^{2k}k^k)) $$

Therefore, choosing $s = O(B^{2k}k^k\log(1/\delta)/\epsilon^2)$ samples from $\Sphere^{k-1}$ bounds the failure probability by $\delta$ and using Lemma~\ref{lem:hypervolume}, our result follows.
\end{proof}

\begin{proof}[Proof of Theorem~\ref{thm:hyper_regret_general} (General Regret Bounds)]
WLOG, let $z = 0$. Let $X_T = \underset{i}{\bigcup} \set{x_{\lambda_i}^{(t)}}_{t=1}^T$. Then, for $\lambda_1,...,\lambda_l$, by the guarantees of $\alg$, we deduce that

$$\frac{1}{l}\sum_{i=1}^l \left[ \max_{x\in\X}s_{\lambda_i}(F(x)) - \max_{x \in X_T} s_{\lambda_i}(F(x)) \right]\leq \epsilon_T$$

By Lemma~\ref{lem:concentration}, we see that for the Pareto frontier $Y^*$, we have concentration to the desired hypervolume:
$$\left|\frac{1}{c_k}\HV_z(Y^*) - \frac{1}{l} \sum_i \max_{x\in\X}  s_{\lambda_i}(x)\right| \leq \epsilon $$

when $l = O(B^{2k}k^k\log(1/\delta)/\epsilon^2)$ with probability $1-\delta$. We would like to apply the same lemma to also show that our empirical estimate is close to $\HV_z(X_T)$. However, since $X_T$ depends on $\lambda_i$, this requires a union bound and we proceed with a $\epsilon$-net argument.

We assume that $F(\X) \subseteq [0, B]^k$ and let us divide the hypercube into a grid with spacing $\Delta$. Then, there are $O((B/\Delta)^k)$ lattice points on the grid. Consider the Pareto frontier of any set of points, call it $S$. Out of the $(B/\Delta)^k$ small hypercubes of volume $\Delta^k$ in grid, note that by the monotonicity property of the frontier, $S$ intersects at most $(2B/\Delta)^{k-1}$ small hypercubes. 

Therefore, we can find a set $S_u$ consisting of at most $(2B/\Delta)^{k-1}$ lattice points on the grid such that $ |\HV_z(S) - \HV_z(S_u)| \leq (2B)^k\Delta$. This can be done by simply looking at each small hypercube that has intersection with $S$ and choosing the lattice point that increases the dominated hypervolume. Since each small hypercube has volume $\Delta^k$ and there are at most $(2B/\Delta)^{k-1}$ hypercubes, the total hypervolume increased is at most $2^k \Delta$. 

Finally, there are at most $(B/\Delta)^{k(2B/\Delta)^{k-1}}$ choices of $S_u$, so to apply a union bound over all possible sets $S_u$, we simply choose $l = O(B^{2k} k^{k+1} (2B/\Delta)^{k-1}\log(B/\Delta)/\epsilon^2)$ so that for any possible $S_u$, we use Lemma~\ref{lem:concentration} to deduce that with high probability,

$$\left|\frac{1}{c_k}\HV_z(S_u) - \frac{1}{l} \sum_i \max_{x\in S_u}  s_{\lambda_i}(x)\right| \leq \epsilon $$

Since $\HV_z(S)$ is close to $\HV_z(S_u)$, we conclude that for any $S$, 

$$\left|\frac{1}{c_k}\HV_z(S) - \frac{1}{l} \sum_i \max_{x\in S}  s_{\lambda_i}(x)\right| \leq \epsilon + \frac{(2B)^k\Delta}{c_k} $$

Together, we conclude that 

$$ |\HV_z(F(X_T)) - \HV_z(Y^*)| \leq c_k\epsilon_T + 2c_k \epsilon + (2B)^k\Delta$$

By choosing $\epsilon = \epsilon_T$ and $\Delta = c_k\epsilon_T(2B)^{-k}$, we conclude.

\end{proof}

\section{Figures}

\begin{figure*}[ht]
\vskip 0.5in
\begin{center}
\centerline{\includegraphics[width=7.5in]{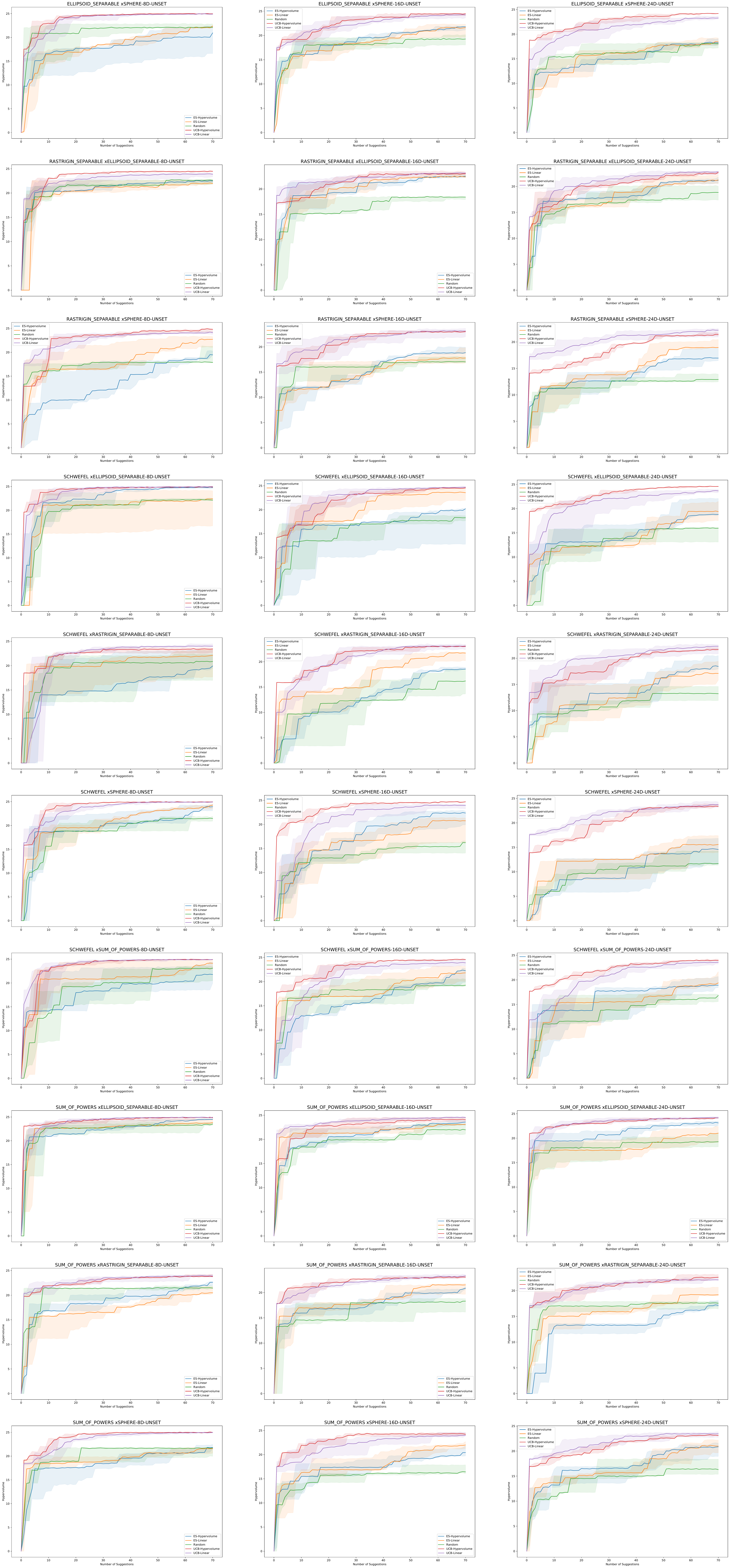}}
\end{center}
\end{figure*}

\begin{figure*}[ht]
\vskip 0.5in
\begin{center}
\centerline{\includegraphics[width=7.5in]{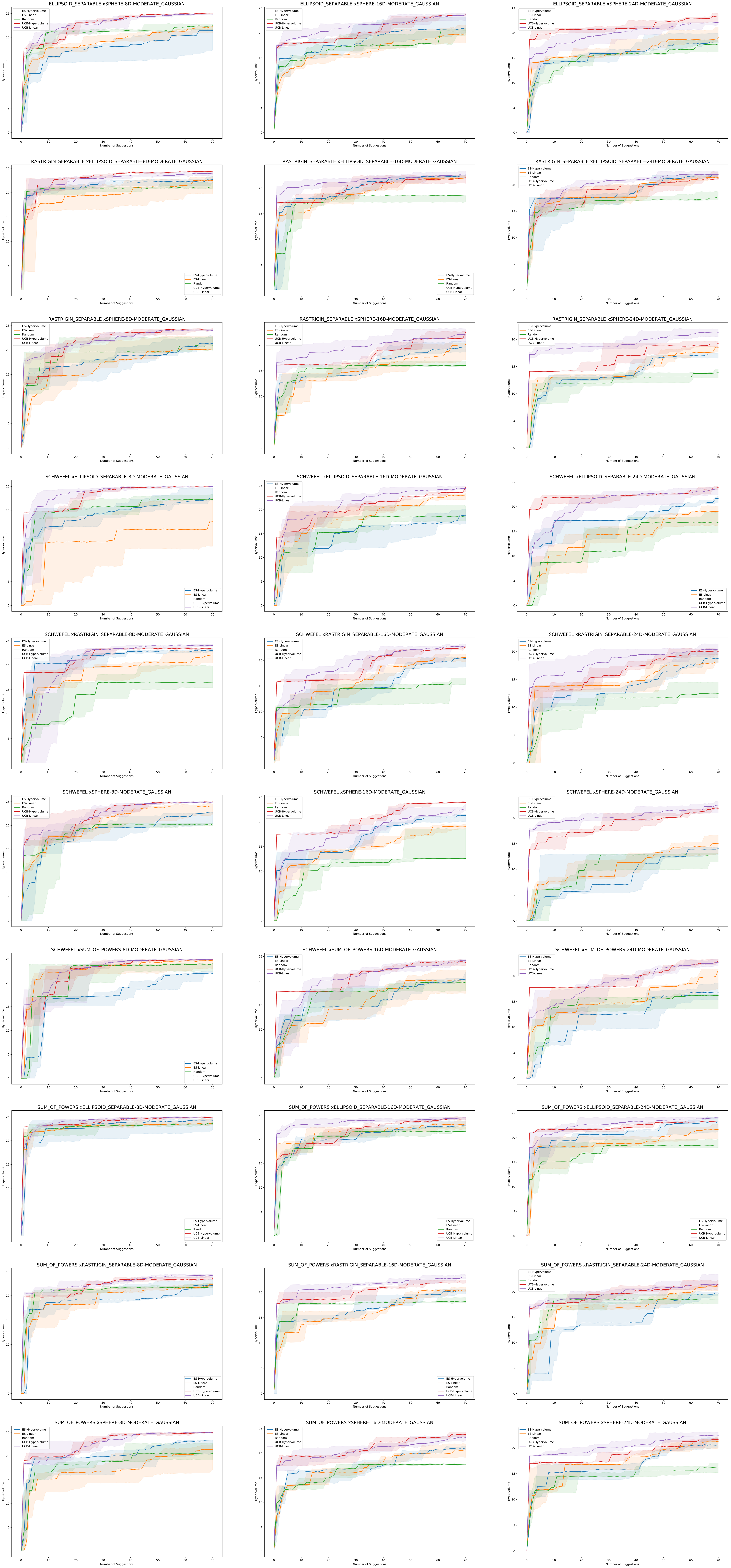}}
\end{center}
\vskip -0.2in
\end{figure*}

\end{document}